\documentclass{article}

\usepackage{manuscript_style}

\usepackage[utf8]{inputenc}
\usepackage[T1]{fontenc}
\usepackage{hyperref}
\usepackage{url}
\usepackage{microtype}
\usepackage{graphicx}
\usepackage{booktabs}
\usepackage{xcolor}
\usepackage{algorithm}
\usepackage{algorithmic}
\usepackage{multirow}
\usepackage{siunitx}
\usepackage{makecell}
\usepackage{caption}
\usepackage{float}
\usepackage{amsmath}
\usepackage{amssymb}
\usepackage{mathtools}
\usepackage{amsthm}
\usepackage[capitalize,noabbrev]{cleveref}

\graphicspath{{../}{../figs/}{figs/}{../assets/}{assets/}}

\newcommand{\maybeincludegraphics}[2][]{%
  \IfFileExists{#2}{\includegraphics[#1]{#2}}{%
    \IfFileExists{../#2}{\includegraphics[#1]{../#2}}{%
      \fbox{\begin{minipage}[c][0.08\textheight][c]{0.90\linewidth}\centering Missing figure asset: \texttt{\detokenize{#2}}\end{minipage}}%
    }%
  }%
}

\theoremstyle{plain}
\newtheorem{theorem}{Theorem}[section]
\newtheorem{proposition}[theorem]{Proposition}

\theoremstyle{definition}

\theoremstyle{remark}

\title{LinearARD: Linear-Memory Attention Distillation for RoPE Restoration}

\author{%
  Ning Yang$^{1,*}$\thanks{$^*$Equal contribution. Corresponding author: \texttt{ning.yang@ia.ac.cn}.}, \,
  Hengyu Zhong$^{2,*}$, \,
  Wentao Wang$^3$, \,
  Baoliang Tian$^4$, \\
  \textbf{Yuan Zhou}$^4$, \,
  \textbf{Haijun Zhang}$^5$, \,
  \textbf{Jun Wang}$^6$ \\[1ex]
  $^1$Institute of Automation, Chinese Academy of Sciences, Beijing, China \\
  $^2$Southwest University, Chongqing, China \\
  $^3$Dalian University of Technology, Dalian, Liaoning, China \\
  $^4$Tianjin University, Tianjin, China \\
  $^5$University of Science and Technology Beijing, Beijing, China \\
  $^6$University College London, London, United Kingdom
}

\begin{document}

\maketitle

\begin{abstract}
The extension of context windows in Large Language Models is typically facilitated by scaling positional encodings followed by Continued Pre-Training (CPT). While effective, this paradigm is notoriously data-hungry and computationally expensive, requiring massive long-text corpora to recalibrate the model to the shifted positional distribution. We propose LinearARD, a self-distillation method that restores Rotary Position Embedding (RoPE)-scaled students through attention-structure consistency with a frozen native-RoPE teacher. Rather than next-token prediction or opaque hidden-state matching, LinearARD aligns row-wise distributions of dense $Q/Q$, $K/K$, and $V/V$ self-relation matrices from the final attention layer to directly supervise attention dynamics. To remove the quadratic memory bottleneck of $n \times n$ relation maps, we introduce a linear-memory kernel that stores only per-token log-sum-exp statistics and recomputes logits in the backward pass to obtain exact Kullback-Leibler divergence gradients. Across LLaMA2-7B, LLaMA3-8B, and Mistral-7B-v0.1 extended to 32K context, LinearARD recovers 93.1\%/94.2\%/94.3\% of native short-context performance and achieves strong long-context robustness on RULER using only \textbf{4.25M} tokens---amounting to just 1.6\% (a $\sim$60$\times$ reduction) of the 256M-token budget required by state-of-the-art baselines. Under this severely constrained budget, CPT and LongReD remain near zero on RULER, demonstrating that relation-level restoration provides a substantially better efficiency-quality tradeoff.
\end{abstract}

\section{Introduction}

The practical utility of Large Language Models (LLMs) is increasingly defined by their long-context capabilities, which are essential for advanced tasks such as retrieval-augmented generation \cite{asai2024self}, multi-step agentic workflows \cite{yao2022react}, and document-level reasoning \cite{liu2024lost}. However, training LLMs from scratch on extended context windows is prohibitively expensive, which motivates the development of post-hoc context extension methods that leverage pretrained models \cite{chen2023pi, peng2024yarn, ding2024longrope,zhu2024poseefficientcontextwindow}.

A widely used family of techniques extends the context window at inference time by modifying Rotary Position Embeddings (RoPE)~\cite{su2021roformer}. These methods include position interpolation and related scaling schedules such as linear interpolation~\cite{chen2023pi}, Yet another RoPE extension method (YaRN)~\cite{peng2024yarn}, and LongRoPE~\cite{ding2024longrope}. While effective for long-context inference, such scaling often degrades short-context accuracy. This is because altering the rotary frequency schedule shifts the relative positional relationships between tokens and disrupts the learned attention patterns, which consequently impairs performance on shorter sequences.

To mitigate this trade-off, various approaches have been proposed. Continued Pre-Training (CPT)~\cite{ke2023continual} adapts the model by further training on long-context data under the scaled configuration. However, this approach typically requires substantial computational resources and high-quality long-context corpora, which are often scarce, and it runs the risk of catastrophic forgetting regarding the original short-context capabilities. Consequently, restoration distillation was introduced in subsequent works~\cite{gu2023minillm,longred2025}, which treats the original model as a teacher and the RoPE-scaled model as a student, fine-tuning the student to match the teacher on sequences within the teacher's native length while retaining the expanded maximum context. Nevertheless, existing restoration methods primarily focus on hidden-state matching. Because hidden states are aggregated outputs, they provide only indirect constraints that fail to precisely rectify the fine-grained positional distortions in the attention mechanism, limiting the efficiency and accuracy of the restoration.

Aligning distributional quantities within the attention module presents a conceptually appealing solution to the limitations of indirect constraints. Since RoPE scaling acts directly on queries and keys, it fundamentally alters attention logits; therefore, directly supervising the resulting distributions offers a more precise path to restoration. However, this approach faces a significant system-level bottleneck. Attention-level objectives typically incur quadratic memory overhead, and materializing full attention maps for backpropagation quickly exhausts GPU memory even at moderate sequence lengths. This burden is further exacerbated in a distillation setting, where both teacher and student models must be maintained simultaneously. Consequently, prior attention distillation methods~\cite{mobilebert2020,minilm2020,jiao2020tinybert} have been typically restricted to short sequences. Although various targets have been investigated, such as attention maps~\cite{mobilebert2020,jiao2020tinybert}, relation matrices~\cite{minilm2020,minilmv2}, and output logits~\cite{hinton2015distilling,sanh2019distilbert}, scaling these to long contexts often necessitates selective or sparse objectives~\cite{seekr2024}, which sacrifices exact distribution matching.

Following the discussion on attention distillation and full attention maps, \textbf{LinearARD} is proposed. This approach targets the root cause of performance degradation by enforcing structural consistency on the dense self-relation matrices, specifically $Q/Q$, $K/K$, and $V/V$, within each attention head. To overcome the associated memory bottleneck, a specialized linear-memory kernel is designed to compute the exact Kullback-Leibler (KL) divergence and its gradients without materializing full probability matrices, thereby enabling high-fidelity structural distillation on long sequences.

Our contributions are summarized as follows:
\begin{itemize}
    \item We introduce an IO-aware gradient fusion kernel that computes the exact KL divergence with linear memory complexity $\mathcal{O}(n)$. By bypassing the quadratic memory bottleneck, this kernel enables direct structural supervision on ultra-long sequences, where standard methods would otherwise fail due to GPU memory exhaustion.
    
    \item We propose LinearARD, a framework that enforces structural consistency across $Q/Q, K/K,$ and $V/V$ self-relations. This approach directly rectifies the positional misalignments induced by RoPE scaling, ensuring the student precisely recovers the teacher's original attention patterns.
    
    \item Theoretical analysis shows that the proposed kernel scales linearly in memory (Theorem~\ref{thm:memory}) while remaining exactly equivalent to standard full-matrix backpropagation (Proposition~\ref{prop:exactness}), providing a foundation for high-fidelity distillation.
    
    \item Extensive evaluations on LLaMA2-7B, LLaMA3-8B, and Mistral-7B-v0.1 show that LinearARD attains strong long-context robustness (RULER 63.0/67.8/60.2) using only 4.25M training tokens, while retaining 93.1\%/94.2\%/94.3\% of native short-context performance. This corresponds to roughly a 60$\times$ token reduction relative to 256M-token restoration baselines.

\end{itemize}

\section{Related Work}
\label{sec:background}

\paragraph{Efficient Attention and Memory Optimization.}
The quadratic memory complexity of self-attention ($O(n^2)$) constitutes the primary bottleneck for long-context modeling. Seminal works like FlashAttention~\cite{flashattention2022,flashattention2_2023} and Ring Attention~\cite{liu2023ringattention} address this by tiling computations in GPU SRAM and utilizing online statistics to compute the attention output without materializing the full attention matrix. While these kernels reduce the memory footprint of standard forward and backward passes to linear scale ($O(n)$), they do not support the computation of distributional alignment objectives. In a distillation setting, minimizing the KL divergence between teacher and student distributions typically necessitates instantiating full $n \times n$ probability matrices to calculate loss gradients, which reintroduces the quadratic memory bottleneck. Our work bridges this gap by proposing an IO-aware kernel that fuses the KL divergence calculation into the backward pass. Unlike prior kernels that optimize the inference pathway, our Kernel enables exact, linear-memory supervision, allowing us to apply dense constraints where it was previously computationally intractable.

\begin{figure*}[t]
    \centering
    \maybeincludegraphics[width=\textwidth]{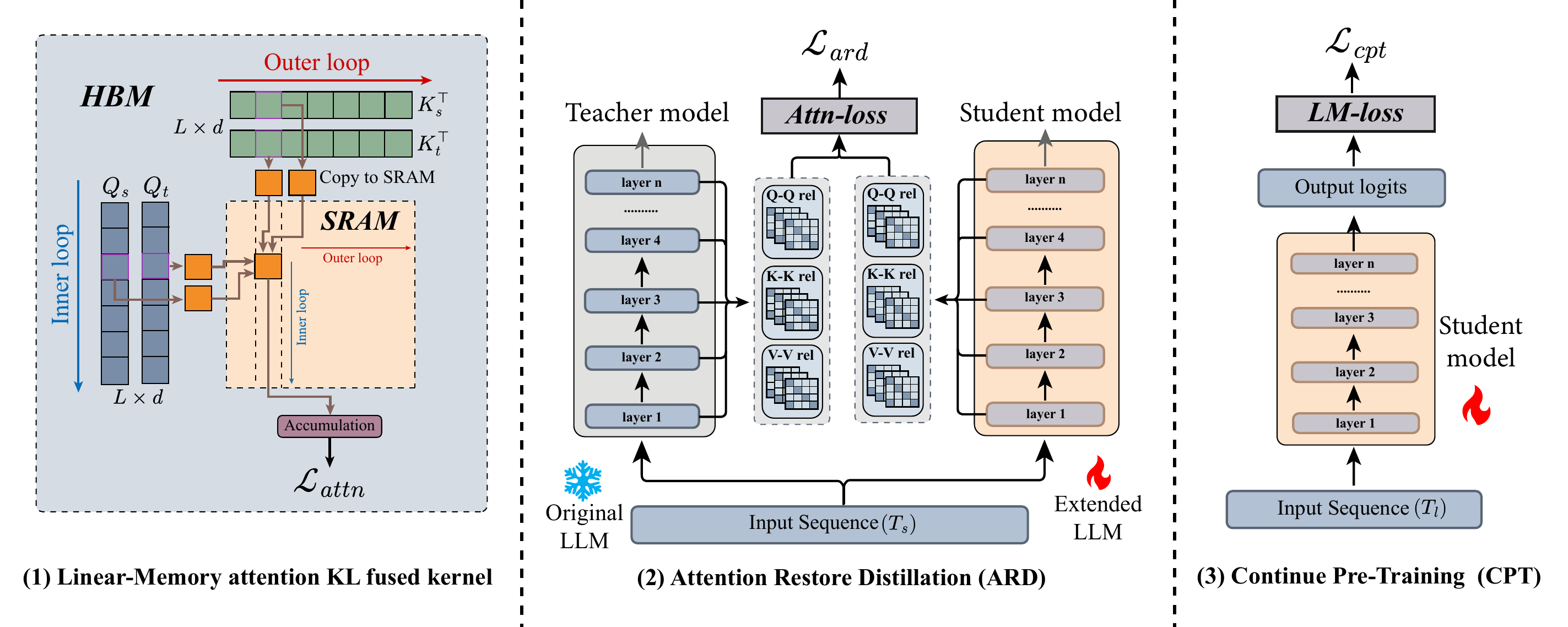}
    \caption{LinearARD pipeline. A frozen native-RoPE teacher provides final-layer dense row-wise relation distributions ($Q/Q$, $K/K$, and $V/V$), and the RoPE-scaled student is restored by minimizing relation KL with an exact linear-memory kernel.}
    \label{fig:method_overview}
\end{figure*}

\paragraph{Structural Distillation in Transformers.}
Knowledge Distillation (KD)~\cite{hinton2015distilling} is a standard paradigm for transferring capabilities from a teacher to a student model. Beyond logit and hidden-state distillation, aligning attention mechanisms is critical for capturing linguistic dependencies. Approaches such as TinyBERT~\cite{jiao2020tinybert} and MobileBERT~\cite{mobilebert2020} align attention maps directly, while MiniLM~\cite{minilm2020,minilmv2} enhances stability by distilling self-relation modules. However, the efficacy of these methods is strictly limited by their spatial complexity; enforcing consistency on dense matrices is feasible only for short context windows. Consequently, recent long-context works resort to sparse supervision or token-level logit matching~\cite{seekr2024}, sacrificing structural fidelity for memory efficiency. By overcoming this memory barrier, our framework revisits and scales these dense objectives. Full-context alignment of $Q$, $K$, and $V$ self-relation distributions is enabled on ultra-long sequences, ensuring the student retains the precise structural priors of the teacher.

\paragraph{RoPE Scaling and Context Restoration.}
Techniques such as Position Interpolation (PI)~\cite{chen2023pi}, YaRN~\cite{peng2024yarn}, and LongRoPE~\cite{ding2024longrope} extend the context window of pretrained LLMs by rescaling RoPE. While effective for extension, this rescaling distorts the rotation-sensitive geometric relationships established during pretraining, leading to a collapse in short-context performance. CPT~\cite{ke2023continual} mitigates this but requires extensive compute and risks catastrophic forgetting. More recently, restoration distillation methods like LongReD~\cite{longred2025} have attempted to recover performance by matching hidden states. However, hidden states are aggregated features derived after the attention operation; they provide only a coarse, indirect signal that fails to isolate the root cause of the degradation. Since RoPE scaling directly perturbs the dot-product attention logits~\cite{su2021roformer}, restoration must target these internal relational structures directly. \textbf{LinearARD} corrects these fine-grained geometric distortions at their source, achieving significantly higher data efficiency and restoration accuracy than indirect hidden-state matching.

\section{Methodology}
\label{sec:method}

RoPE scaling modifies the rotary frequency schedule, altering the distance-dependent phase between queries and keys and causing the attention mechanism to drift relative to the original model. This redistribution of attention can degrade the short-context behavior established during pretraining, thereby reducing performance on short-context tasks. This issue is addressed by formulating restoration as a self-distillation problem between an unscaled teacher model and a RoPE-scaled student model. The teacher provides stable supervision on sequences within its native context range, and the student is optimized to recover the teacher's short-context behavior while retaining the extended context window enabled by RoPE scaling.

This section first defines the restoration objective by specifying which internal structures are matched between the teacher and the RoPE-scaled student (Sec.~\ref{subsec:objective}). It then introduces an exact linear-memory operator, a dedicated Kernel for KL distillation on dense $n\times n$ relation maps that bypasses the quadratic-memory bottleneck (Sec.~\ref{subsec:io_aware}). Finally, the discussion details the training procedure and implementation choices, including a parameter-efficient restoration recipe and an optional lightweight CPT stage (Sec.~\ref{subsec:training_recipe}).

\textbf{Method Overview.} As illustrated in Fig.~\ref{fig:method_overview}, the approach consists of three components in the above order. First, the teacher model's final-layer relational structure is distilled by aligning row-wise relation distributions induced by $Q/Q$, $K/K$, and $V/V$ self-relations (Sec.~\ref{subsec:objective}). Second, the quadratic-memory bottleneck is eliminated with an exact linear-memory Kernel for KL distillation of dense $n\times n$ relation maps (Sec.~\ref{subsec:io_aware}). Third, a practical and parameter-efficient restoration procedure is adopted to optimize the student under these objectives, optionally augmented with a lightweight CPT stage (Sec.~\ref{subsec:training_recipe}).

\subsection{Problem Setup}
This work considers a decoder-only Transformer with $L$ layers and $H$ attention heads per layer. Let $B$ denote the batch size, $n$ the sequence length, and $d$ the per-head dimension. The original model with native RoPE serves as a frozen teacher, while the student shares the same architecture but employs a scaled RoPE configuration to support a larger maximum context length. During distillation, both models are evaluated on the same token sequence of length $n$ within the teacher's supported range. The teacher remains fixed to provide supervision, while the student uses scaled RoPE during both training and inference. Unless explicitly stated as an ablation, LinearARD applies the relation-alignment loss only to the final Transformer layer, denoted by $\ell^\star=L$.

The model employs masked causal self-attention with an additive mask $\mathbf{M}\in\mathbb{R}^{n\times n}$. Query positions are indexed by $i\in\{1,\ldots,n\}$ and key positions by $j\in\{1,\ldots,n\}$. This mask enforces causality and padding: $\mathbf{M}(i,j)=-\infty$ if position $j$ is not visible to query $i$, and $\mathbf{M}(i,j)=0$ otherwise.

For model index $m\in\{t,s\}$, where $t$ and $s$ denote the teacher and student respectively, the query, key, and value tensors in the selected final layer $\ell^\star$ are represented as $\mathbf{Q}_m,\mathbf{K}_m,\mathbf{V}_m\in\mathbb{R}^{B\times H\times n\times d}$. To simplify the definition of the distillation objective, a fixed attention head and batch element are considered, treating $\mathbf{Q}_m,\mathbf{K}_m,\mathbf{V}_m$ as matrices in $\mathbb{R}^{n\times d}$. Head and batch indices are omitted when the context is unambiguous.

\begin{figure}[t]
    \centering
    \maybeincludegraphics[width=\textwidth]{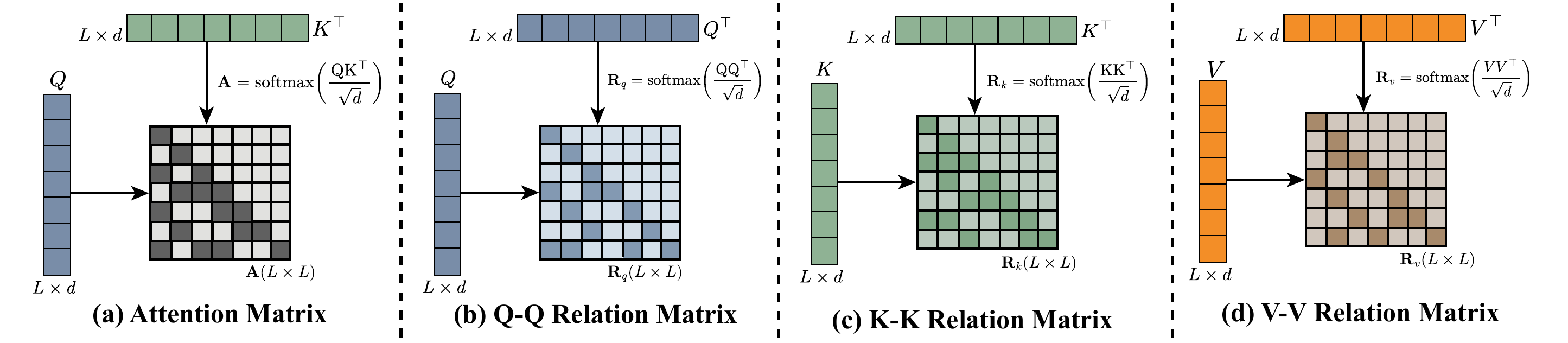}
    \caption{Final-layer standard attention vs.\ QKV self-relations. (a) Attention matrix $\mathbf{A}$ computed from $\mathbf{Q}\mathbf{K}^{\top}$ (shown for reference), and (b--d) Q/Q, K/K, and V/V relation matrices computed from $\mathbf{Q}\mathbf{Q}^{\top}$, $\mathbf{K}\mathbf{K}^{\top}$, and $\mathbf{V}\mathbf{V}^{\top}$ (Eq.~\ref{eq:qkv_relations}). LinearARD distills the row-wise relation distributions in (b--d) from the final Transformer layer by aligning teacher vs.\ student rows via forward KL (Eq.~\ref{eq:kl_def}) under the same mask $\mathbf{M}$.}
    \label{fig:qkv_relation}
\end{figure}

\subsection{Distillation Objectives}
\label{subsec:objective}
RoPE scaling perturbs the relative geometry of the projected representations, changing the relation distributions induced inside each attention head. The student is restored by matching these row-wise relation distributions to those of the frozen teacher.

\paragraph{Relation distributions.}
Fix the selected final layer, an attention head, and a batch element. Let $\mathbf{X}_m,\mathbf{Y}_m\in\mathbb{R}^{n\times d}$ be two sequences of projected vectors from model $m\in{t,s}$. A masked similarity logit matrix is defined
\begin{equation}
\label{eq:sim_logits_def}
    \mathbf{Z}_m \;=\; \frac{1}{\sqrt{d}}\,\mathbf{X}_m \mathbf{Y}_m^\top \;+\; \mathbf{M}
    \quad\in\mathbb{R}^{n\times n}.
\end{equation}
Applying a row-wise softmax yields a row-wise relation distribution
\begin{equation}
\label{eq:rel_dist_def}
    \mathbf{R}_m \;=\; \mathrm{softmax}(\mathbf{Z}_m),
\end{equation}
where $\mathbf{R}_m(i,:)$ is a categorical distribution over key positions $j\in\{1,\ldots,n\}$ for each fixed query position $i$.

For individual entries, the following notation is adopted: for a fixed pair $(i,j)$, let $z_{m} \triangleq \mathbf{Z}_m(i,j)$ and $r_m \triangleq \mathbf{R}_m(i,j)$. Additionally, $r_t$ and $r_s$ denote the teacher and student probabilities, respectively, while $z_s$ represents the corresponding student logit.

\paragraph{KL objective.}
The distillation objective matches teacher and student relation distributions by minimizing the average forward KL divergence between corresponding rows:
\begin{equation}
\label{eq:kl_def}
    \mathcal{L}_{\text{KL}}(\mathbf{R}_t,\mathbf{R}_s)
    = \frac{1}{n} \sum_{i=1}^n D_{\mathrm{KL}}\!\big(\mathbf{R}_t(i, :) \parallel \mathbf{R}_s(i, :)\big),
\end{equation}
In practice, this loss is computed on the final Transformer layer $\ell^\star=L$ and averaged over heads and batch elements; lower layers are not directly aligned in the default LinearARD objective. Layer-scope variants are studied only as ablations in Appendix~\ref{app:layerwise}.

\begin{proposition}[Gradient Behavior in Sparse Regimes]
\label{prop:grad_sensitivity}
Fix a query--key pair $(i,j)$ and consider the teacher and student probabilities $r_t \triangleq \mathbf{R}_t(i,j)$ and $r_s \triangleq \mathbf{R}_s(i,j)$, and the student logit $z_s \triangleq \mathbf{Z}_s(i,j)$. For $\mathcal{L}_{\text{MSE}} \triangleq \tfrac{1}{2}(r_s-r_t)^2$ and $\mathcal{L}_{\text{KL}} \triangleq r_t \log (r_t/r_s)$, the gradients with respect to $z_s$ satisfy:
\[
\frac{\partial \mathcal{L}_{\text{MSE}}}{\partial z_s} = (r_s-r_t)\,r_s(1-r_s),
\qquad
\frac{\partial \mathcal{L}_{\text{KL}}}{\partial z_s} = r_s-r_t.
\]
Consequently, as $r_s \to 0$ with $r_t>0$:
\[
\frac{\partial \mathcal{L}_{\text{MSE}}}{\partial z_s} \to 0,
\qquad
\frac{\partial \mathcal{L}_{\text{KL}}}{\partial z_s} \to -r_t.
\]
\end{proposition}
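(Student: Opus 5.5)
The plan is to differentiate directly through the row-wise softmax of Eq.~\eqref{eq:rel_dist_def}, using the standard softmax Jacobian. Fix the row $i$ and write $r_s^{(k)} \triangleq \mathbf{R}_s(i,k)$ and $z_s^{(k)} \triangleq \mathbf{Z}_s(i,k)$. Then
\[
\frac{\partial r_s^{(k)}}{\partial z_s^{(j)}} = r_s^{(k)}\big(\delta_{kj} - r_s^{(j)}\big).
\]
The teacher probability $r_t$ is frozen, so it contributes no gradient. Masked entries with $\mathbf{M}(i,j)=-\infty$ have $r_s=r_t=0$ and can be ignored throughout. It is therefore enough to consider visible pairs $(i,j)$.

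For the MSE term I will use only the dependence of the entry $r_s=\mathbf{R}_s(i,j)$ on its own logit $z_s$. This is the diagonal Jacobian entry $r_s(1-r_s)$. The chain rule then gives $(r_s-r_t)\,r_s(1-r_s)$ in one line.

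For the KL term I will work at the level of the row objective $\sum_k r_t^{(k)}\log\big(r_t^{(k)}/r_s^{(k)}\big)$ from Eq.~\eqref{eq:kl_def}, since that is the quantity actually optimized. The cleanest route is to write $\log r_s^{(k)} = z_s^{(k)} - \log\sum_{k'} e^{z_s^{(k')}}$. Differentiating with respect to $z_s^{(j)}$ gives $-r_t^{(j)} + \big(\sum_k r_t^{(k)}\big)\, r_s^{(j)}$. Because the teacher row sums to one, this equals $r_s-r_t$.

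I expect the main obstacle to be exactly this interpretive point, so I will state it explicitly in the proof. The isolated single-entry expression $r_t\log(r_t/r_s)$, differentiated through only its own logit, gives $-r_t(1-r_s)$ rather than $r_s-r_t$. The identity $r_s-r_t$ holds for the per-entry gradient of the full row KL, which is how the loss is used in Eq.~\eqref{eq:kl_def}. I will also point out that the two readings coincide in the limit considered, since both tend to $-r_t$.

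The asymptotic claims then follow by continuity. As $r_s\to 0$ with $r_t>0$ fixed, the factor $(r_s-r_t)$ stays bounded by $1$ while $r_s(1-r_s)\to 0$, so the MSE gradient vanishes. Meanwhile $r_s-r_t\to -r_t\neq 0$. I will close with the interpretation: under MSE, entries where the student has assigned vanishing mass but the teacher has not receive no corrective signal. Under KL, such entries receive a gradient proportional to the teacher mass, which pushes $z_s$ upward.
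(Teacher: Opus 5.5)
Your proposal is correct and takes essentially the same route as the paper: the MSE gradient comes from the chain rule with the diagonal softmax Jacobian $r_s(1-r_s)$, and the KL identity $r_s-r_t$ comes from differentiating the row-summed KL, which the paper cites as the standard softmax/cross-entropy identity and you derive explicitly via $\log r_s^{(k)} = z_s^{(k)} - \log\sum_{k'} e^{z_s^{(k')}}$. Your caveat is accurate: the literal single-entry expression $r_t\log(r_t/r_s)$ yields only $-r_t(1-r_s)$, and the paper's proof silently switches to the row KL. Your limit $-r_t$ also has the correct sign, whereas the paper's appendix writes $r_t(j)$ by a slip.
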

The proof is provided in Appendix~\ref{app:proof_gradient}.

This analytical distinction motivates the choice of forward KL over probability MSE because attention-like distributions are typically sparse and peaked. When the student assigns near-zero mass to a dependency supported by the teacher, MSE gradients can vanish due to the softmax Jacobian, whereas forward KL provides a first-order correction signal.

\begin{algorithm}[tb]
   \caption{Linear-Memory KL Distillation for Relation Distributions (Forward)}
   \label{alg:shadow_attn_fwd}
\begin{algorithmic}[1]
   \STATE {\bfseries Input:} Student $(\mathbf{X}_s, \mathbf{Y}_s)$, Teacher $(\mathbf{X}_t, \mathbf{Y}_t)$.
   \STATE {\bfseries Output:} Loss $\mathcal{L}$.
   \STATE {\bfseries Note:} For QKV relations, set $(\mathbf{X}_m,\mathbf{Y}_m)\leftarrow(\mathbf{Q}_m,\mathbf{Q}_m)$, $(\mathbf{K}_m,\mathbf{K}_m)$, or $(\mathbf{V}_m,\mathbf{V}_m)$ as in Eq.~\ref{eq:qkv_relations}.
   \STATE {\bfseries Note:} $\text{ComputeLSE}(\mathbf{X}_m,\mathbf{Y}_m)$ returns $\mathrm{LSE}_m\in\mathbb{R}^{n}$ with $\mathrm{LSE}_m(i)=\log\sum_{k=1}^n \exp(\mathbf{Z}_m(i,k))$.
   \STATE \textsc{Phase 1: Global Statistics (Linear Memory)}
   \STATE $\mathrm{LSE}_s \leftarrow \text{ComputeLSE}(\mathbf{X}_s, \mathbf{Y}_s)$
   \STATE $\mathrm{LSE}_t \leftarrow \text{ComputeLSE}(\mathbf{X}_t, \mathbf{Y}_t)$
   \STATE \textsc{Phase 2: Fused Forward Pass via Tiling}
   \STATE Initialize loss $\mathcal{L} \leftarrow 0$
   \FOR{blocks of queries $\mathbf{X}_s^{(i)}, \mathbf{X}_t^{(i)}$ loaded to SRAM}
       \FOR{blocks of keys $\mathbf{Y}_s^{(j)}, \mathbf{Y}_t^{(j)}$ loaded to SRAM}
           \STATE \textcolor{gray}{// Recompute logits on-the-fly}
           \STATE $\mathbf{Z}_s \leftarrow \frac{1}{\sqrt{d}}\mathbf{X}_s^{(i)} (\mathbf{Y}_s^{(j)})^\top  + \mathbf{M}^{(i,j)}$
           \STATE $\mathbf{Z}_t \leftarrow \frac{1}{\sqrt{d}}\mathbf{X}_t^{(i)} (\mathbf{Y}_t^{(j)})^\top  + \mathbf{M}^{(i,j)}$
           \STATE \textcolor{gray}{// Reconstruct log-probabilities using pre-computed LSE}
           \STATE $\log \mathbf{R}_s \leftarrow \mathbf{Z}_s - \mathrm{LSE}_s^{(i)}$
           \STATE $\log \mathbf{R}_t \leftarrow \mathbf{Z}_t - \mathrm{LSE}_t^{(i)}$
           \STATE $\mathbf{R}_t \leftarrow \exp(\log \mathbf{R}_t)$
           \STATE \textcolor{gray}{// Accumulate loss to HBM, where $\odot$ denotes element-wise product}
           \STATE $\mathcal{L} \leftarrow \mathcal{L} + \sum \mathbf{R}_t \odot \left(\log \mathbf{R}_t - \log \mathbf{R}_s\right)$
       \ENDFOR
   \ENDFOR
   \STATE $\mathcal{L} \leftarrow \mathcal{L} / n$
\end{algorithmic}
\end{algorithm}

\paragraph{QKV self-relation targets.}
Eq.~\ref{eq:sim_logits_def} and Eq.~\ref{eq:rel_dist_def} are instantiated using $Q/Q$, $K/K$, and $V/V$ self-relations. Concretely, for each model $m\in{t,s}$ the following are defined:
\begin{subequations}\label{eq:qkv_relations}
\begin{align}
    \mathbf{R}^{Q}_m &= \mathrm{softmax}\!\left(\frac{\mathbf{Q}_m \mathbf{Q}_m^\top}{\sqrt{d}} + \mathbf{M}\right), \\
    \mathbf{R}^{K}_m &= \mathrm{softmax}\!\left(\frac{\mathbf{K}_m \mathbf{K}_m^\top}{\sqrt{d}} + \mathbf{M}\right), \\
    \mathbf{R}^{V}_m &= \mathrm{softmax}\!\left(\frac{\mathbf{V}_m \mathbf{V}_m^\top}{\sqrt{d}} + \mathbf{M}\right).
\end{align}
\end{subequations}
These targets directly constrain the internal relational structure affected by RoPE scaling and are empirically more stable than distilling attention maps. Fig.~\ref{fig:qkv_relation} visualizes the standard attention matrix and the corresponding $Q/Q$, $K/K$, and $V/V$ self-relation distributions.

\paragraph{Overall loss.}
Eq.~\ref{eq:kl_def} is applied to each of $\mathbf{R}^{Q}$, $\mathbf{R}^{K}$, and $\mathbf{R}^{V}$:
\begin{equation}
\label{eq:overall_loss}
\begin{split}
    \mathcal{L} ={} & \lambda_{q}\,\mathcal{L}_{\text{KL}}(\mathbf{R}^{Q}_t,\mathbf{R}^{Q}_s) + \lambda_{k}\,\mathcal{L}_{\text{KL}}(\mathbf{R}^{K}_t,\mathbf{R}^{K}_s)  + \lambda_{v}\,\mathcal{L}_{\text{KL}}(\mathbf{R}^{V}_t,\mathbf{R}^{V}_s).
\end{split}
\end{equation}
Here $\lambda_q,\lambda_k,\lambda_v \ge 0$ are scalar weights; unless otherwise noted, $\lambda_q=\lambda_k=\lambda_v=1$ is used. The loss in Eq.~\ref{eq:overall_loss} is evaluated only for the selected final layer $\ell^\star$ in the main experiments.

\subsection{Linear-Memory KL Distillation Kernel}
\label{subsec:io_aware}

Relational distillation requires matching dense $n\times n$ relation maps, whose naive implementation has quadratic activation memory. An IO-aware tiled gradient fusion kernel is proposed to compute the exact KL loss and gradients using $\mathcal{O}(n)$ memory by avoiding materialization of any $n\times n$ probability matrix.

\paragraph{Key identity.}
Let $\mathbf{Z}_s,\mathbf{Z}_t\in\mathbb{R}^{n\times n}$ be the masked similarity logits of the student and teacher, and let $\mathbf{R}_s=\mathrm{softmax}(\mathbf{Z}_s)$ and $\mathbf{R}_t=\mathrm{softmax}(\mathbf{Z}_t)$ be the corresponding row-wise distributions. Differentiating Eq.~\ref{eq:kl_def} with respect to the student logits yields
\begin{equation}
\label{eq:kl_grad_logits}
    \frac{\partial \mathcal{L}_{\text{KL}}}{\partial \mathbf{Z}_s(i,j)} = \frac{1}{n}\Big(\mathbf{R}_s(i,j) - \mathbf{R}_t(i,j)\Big).
\end{equation}
Eq.~\ref{eq:kl_grad_logits} shows that each gradient entry depends only on the local probabilities for the same $(i,j)$. Therefore, if $\mathbf{R}_m(i,j)$ can be reconstructed on the fly without storing the full matrix, exact gradients can be computed with linear memory.

\paragraph{Two-pass tiled execution.}
A two-pass tiled strategy inspired by FlashAttention~\cite{flashattention2_2023} is adopted. First, the row-wise log-sum-exp statistics are computed and stored
\[
\mathrm{LSE}_m(i) = \log \sum_{k=1}^n \exp(\mathbf{Z}_m(i,k)),
\]
for both models $m\in\{t,s\}$. This requires $\mathcal{O}(n)$ storage in High Bandwidth Memory (HBM).

Second, query and key tiles of size $T_r\times T_c$ are iterated. For each tile, $\mathbf{Z}_s$ and $\mathbf{Z}_t$ are recomputed and local probabilities are reconstructed using the stored $\mathrm{LSE}_m(i)$ values:
\[
\mathbf{R}_m(i,j)=\exp\!\big(\mathbf{Z}_m(i,j)-\mathrm{LSE}_m(i)\big).
\]
All intermediate tile tensors reside in fast on-chip SRAM, and the accumulated gradients are streamed to HBM. Optionally, the scalar KL loss is computed in the same tiled pass via the decomposition
\begin{equation}
\label{eq:kl_decomposed}
\begin{aligned}
    \mathcal{L}_{\text{KL}} = \frac{1}{n} \sum_{i,j} \mathbf{R}_t(i,j) \Big[ &(\mathbf{Z}_t(i,j) - \mathrm{LSE}_t(i)) -(\mathbf{Z}_s(i,j) - \mathrm{LSE}_s(i)) \Big].
\end{aligned}
\end{equation}

\paragraph{Complexity and guarantees.}
The efficiency and correctness of the kernel are established through the following statements.
\begin{theorem}[Linear Memory Complexity]
\label{thm:memory}
The proposed KL distillation Kernel reduces activation memory complexity from $\mathcal{O}(B H n^2)$ to $\mathcal{O}(B H n d)$, which scales linearly with sequence length $n$ for fixed $d$.
\end{theorem}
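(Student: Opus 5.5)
We prove the claim by accounting separately for the memory of the naive implementation and of the proposed kernel. We take "activation memory" to mean all tensors that either stay resident in HBM between the forward and backward passes or are allocated during them. Model parameters and the inputs $\mathbf{X}_m,\mathbf{Y}_m$ are excluded, since both implementations hold them identically. Throughout we fix one (batch, head) pair and multiply by $BH$ at the end; the computations for different pairs are independent, and each is launched as a separate grid program.

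\emph{Baseline.} For the standard implementation we observe that autograd must materialize $\mathbf{Z}_s,\mathbf{Z}_t$ and the softmax outputs $\mathbf{R}_s,\mathbf{R}_t$, because the softmax backward needs its output. It must also hold $\log\mathbf{R}_s$ and the gradient $\partial\mathcal{L}_{\text{KL}}/\partial\mathbf{Z}_s$ of Eq.~\ref{eq:kl_grad_logits}. Each of these is $n\times n$, which gives $\Theta(n^2)$ per pair and $\Theta(BHn^2)$ overall.

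\emph{Kernel.} We walk through Algorithm~\ref{alg:shadow_attn_fwd} and the two-pass backward and list every buffer.
\begin{enumerate}
\item Phase 1 stores $\mathrm{LSE}_s,\mathrm{LSE}_t\in\mathbb{R}^n$. It computes them by the online streaming log-sum-exp over key tiles, which keeps only a running max and a running sum per query row. This costs $O(n)$.
\item Phase 2 keeps tiles $\mathbf{X}^{(i)},\mathbf{Y}^{(j)}$ of size $T_r\times d$ and $T_c\times d$, plus logit and probability tiles of size $T_r\times T_c$. All of these live in SRAM, and $T_r,T_c$ are constants independent of $n$, chosen to fit in SRAM. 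The loss accumulator is a single scalar.
\item The backward pass recomputes $\mathbf{R}_s(i,j)-\mathbf{R}_t(i,j)$ tile by tile, using the key identity Eq.~\ref{eq:kl_grad_logits} and the stored LSE. It multiplies this difference block into the relevant tiles of the inputs, via $\partial\mathbf{Z}_s/\partial\mathbf{X}_s=\mathbf{Y}_s/\sqrt{d}$ and symmetrically for $\mathbf{Y}_s$; for self-relations with $\mathbf{X}=\mathbf{Y}$ the two contributions are summed. The results accumulate into gradient buffers $d\mathbf{X}_s,d\mathbf{Y}_s\in\mathbb{R}^{n\times d}$. These buffers are $O(nd)$.
\item The only tensors saved between forward and backward are the inputs and the LSE vectors.
\end{enumerate}
Summing these, each pair uses $O(nd+n+T_rT_c+T_rd+T_cd)=O(nd)$. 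Multiplying by $BH$ gives $O(BHnd)$, which is linear in $n$ for fixed $d$.

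We expect the main obstacle to be justifying that no hidden $n\times n$ object is needed. Two points require care.

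First, the masked entries with $\mathbf{M}(i,j)=-\infty$ must not require storage. We would argue that $\mathbf{M}$ is never stored: it is regenerated per tile from indices and padding lengths, using the causal condition $j\le i$ and a per-sequence length vector of size $O(B)$.

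Second, the gradient in Eq.~\ref{eq:kl_grad_logits} must be consumable tile-locally. This holds because it depends only on $(i,j)$-local quantities and the per-row $\mathrm{LSE}$. Its contraction into $d\mathbf{X}$ and $d\mathbf{Y}$ is therefore a sum over tiles, and we can accumulate it without ever forming the full $n\times n$ gradient. Exactness of this recomputation is deferred to Proposition~\ref{prop:exactness}; here we only need that each step uses bounded per-tile storage.

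We would also remark that time remains $O(BHn^2d)$; the claim concerns memory only.
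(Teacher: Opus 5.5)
Your proposal is correct and uses the same accounting as the paper's proof. The baseline must store an $n\times n$ distribution or logit map per (batch, head) pair, while the kernel keeps only $O(n)$ LSE statistics, $O(nd)$ gradient buffers, and $n$-independent SRAM tiles, all multiplied by $BH$. Your extra justifications fill in points the paper leaves implicit: computing the LSE by streaming in Phase 1, regenerating $\mathbf{M}$ per tile instead of storing it, and summing the $d\mathbf{X}_s$ and $d\mathbf{Y}_s$ contributions when $\mathbf{X}=\mathbf{Y}$.
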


\begin{proposition}[Mathematical Exactness]
\label{prop:exactness}
The gradients computed by the tiled formulation in Algorithm~\ref{alg:shadow_attn} are mathematically equivalent to the analytical gradients obtained from standard full-matrix backpropagation.
\end{proposition}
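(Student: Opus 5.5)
We prove Proposition~\ref{prop:exactness} by showing that the tiled kernel evaluates, entry by entry and in exact arithmetic, the same expression that standard full-matrix backpropagation produces. We then check that the chain rule through the bilinear map $(\mathbf{X}_s,\mathbf{Y}_s)\mapsto \mathbf{Z}_s$ commutes with the tiling. The argument has three steps.

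\emph{Step 1 (analytic gradient).} Differentiating Eq.~\ref{eq:kl_def} gives the logit gradient in closed form. For row $i$, write $D_{\mathrm{KL}} = \sum_j \mathbf{R}_t(i,j)\log \mathbf{R}_t(i,j) - \sum_j \mathbf{R}_t(i,j)\big(\mathbf{Z}_s(i,j)-\mathrm{LSE}_s(i)\big)$. The teacher quantities are constant with respect to the student. We have $\partial \mathrm{LSE}_s(i)/\partial \mathbf{Z}_s(i,j)=\mathbf{R}_s(i,j)$ and $\sum_j \mathbf{R}_t(i,j)=1$. Together these give Eq.~\ref{eq:kl_grad_logits}, $\mathbf{G}(i,j)\triangleq \frac1n(\mathbf{R}_s(i,j)-\mathbf{R}_t(i,j))$; this is the per-entry computation of Proposition~\ref{prop:grad_sensitivity}, summed over $j$. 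Masked entries need separate treatment. When $\mathbf{M}(i,j)=-\infty$, both probabilities are $0$, so $\mathbf{G}(i,j)=0$. The kernel must also return $0$ there, which it does when masked tiles are skipped or $\exp(-\infty)=0$ is applied. Every row contains at least its diagonal entry, so each $\mathrm{LSE}_m(i)$ is finite.

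\emph{Step 2 (reconstruction is exact).} Phase 1 stores $\mathrm{LSE}_m(i)=\log\sum_k\exp(\mathbf{Z}_m(i,k))$. If it is computed with the online max-rescaling recurrence of FlashAttention, we need to show the recurrence returns this value exactly. We do so by induction over key tiles: the invariant $m^{(c)}+\log \ell^{(c)}=\log\sum_{k\le c\text{-th tile}}\exp \mathbf{Z}_m(i,k)$ is preserved because the rescaling by $\exp(m^{\text{old}}-m^{\text{new}})$ cancels algebraically. Since recomputing $\mathbf{Z}_m$ on a tile is deterministic, $\exp(\mathbf{Z}_m(i,j)-\mathrm{LSE}_m(i))=\mathbf{R}_m(i,j)$ for every $(i,j)$. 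The gradient tile the kernel forms therefore equals the restriction of $\mathbf{G}$ to that tile.

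\emph{Step 3 (chain rule through tiles).} Full backpropagation gives $\nabla_{\mathbf{X}_s}=\frac{1}{\sqrt d}\mathbf{G}\mathbf{Y}_s$ and $\nabla_{\mathbf{Y}_s}=\frac{1}{\sqrt d}\mathbf{G}^\top\mathbf{X}_s$. In the self-relation case $\mathbf{X}_s=\mathbf{Y}_s$, the two contributions are summed. Partition the index sets into row blocks $I$ and column blocks $J$. Then $(\mathbf{G}\mathbf{Y}_s)_{I,:}=\sum_J \mathbf{G}_{I,J}(\mathbf{Y}_s)_{J,:}$, and the analogous identity holds for $\mathbf{G}^\top\mathbf{X}_s$. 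The per-tile accumulations in the backward kernel therefore sum exactly to the full products. Gradients with respect to $\mathbf{Q},\mathbf{K},\mathbf{V}$ and the projection weights then follow by the ordinary autograd chain, which is unchanged. The loss value from Eq.~\ref{eq:kl_decomposed} matches Eq.~\ref{eq:kl_def} by the same substitution.

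The main obstacle is Step 2. We must state precisely in what sense the result is ``exact'': it is equality in exact real arithmetic, not bitwise floating-point agreement, because summation order differs. We must also handle $-\infty$ masking and the online-LSE invariant carefully. Steps 1 and 3 are routine linear algebra.
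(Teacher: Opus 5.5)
Your proposal is correct and follows essentially the same route as the paper's proof. Both arguments rest on three facts: the closed-form logit gradient $\frac{1}{n}(\mathbf{R}_s-\mathbf{R}_t)$, exact reconstruction of the probabilities from the stored $\mathrm{LSE}$ statistics, and distributivity of the per-tile products over key blocks, which recovers the full chain-rule sum over $j$.

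Your version is in fact more careful than the paper's. The paper drops the $1/n$ factor and treats only $\nabla_{\mathbf{X}_s}$. It also does not address the self-relation case $\mathbf{X}_s=\mathbf{Y}_s$, where both contributions must be summed, the $-\infty$ mask entries, or whether an online $\mathrm{LSE}$ computation returns the exact value.
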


The proofs for Theorem~\ref{thm:memory} and Proposition~\ref{prop:exactness} are provided in Appendix~\ref{app:complexity} and Appendix~\ref{app:exactness}. Figure~\ref{fig:linear_kl_memory} provides empirical diagnostics verifying memory scaling and numerical exactness, and Appendix~\ref{app:kernel_throughput} reports wall-clock throughput. By choosing $(\mathbf{X},\mathbf{Y})$ as $(\mathbf{Q},\mathbf{Q})$, $(\mathbf{K},\mathbf{K})$, or $(\mathbf{V},\mathbf{V})$, this Kernel enables exact QKV relation distillation described in Sec.~\ref{subsec:objective} for long sequences.

\begin{figure}[t]
    \centering
    \maybeincludegraphics[width=0.74\linewidth]{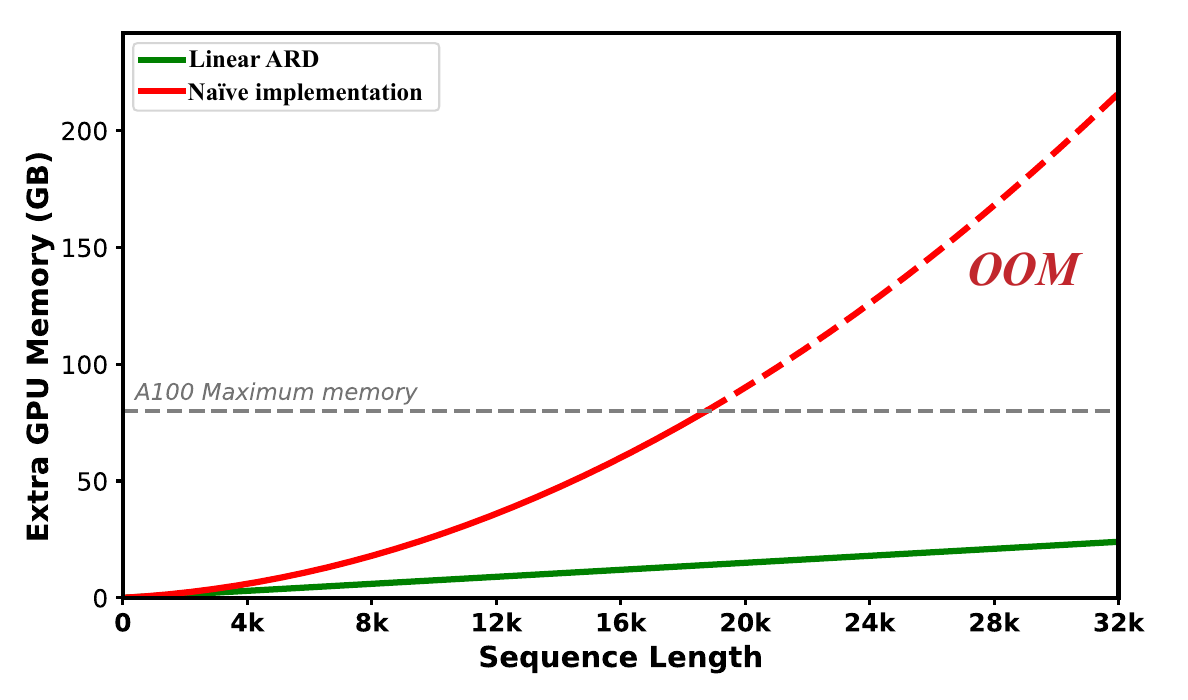}
    \caption{Memory scaling of the linear-memory KL distillation Kernel in Sec.~\ref{subsec:io_aware} as a function of sequence length.}
    \label{fig:linear_kl_memory}
\end{figure}

\subsection{Training Recipe and Parameter Efficiency}
\label{subsec:training_recipe}
With the distillation targets and losses defined and the exact linear-memory KL kernel established, the training procedure used to optimize the RoPE-scaled student is described next. The goal is to restore short-context behavior while retaining the extended context window enabled by RoPE scaling.
\paragraph{Stage 1: Attention distillation.}
Distillation is performed on sequences within the teacher's native context range. By optimizing Eq.~\ref{eq:overall_loss} on the final attention layer, the $Q/Q$, $K/K$, and $V/V$ relation distributions in Eq.~\ref{eq:qkv_relations} are aligned, directly correcting relational distortions introduced by RoPE scaling while avoiding the cost of supervising every layer.

\begin{table*}[t]
\centering
\scriptsize
\renewcommand{\arraystretch}{1.10}
\setlength{\tabcolsep}{2.0pt}
\resizebox{\textwidth}{!}{%
\begin{tabular}{l c c l r r r r r r r r r r r r}
\toprule
\multicolumn{4}{c}{} & \multicolumn{9}{c}{Short-text accuracy (\%)} & \multicolumn{3}{c}{Summary} \\
\cmidrule(lr){5-13} \cmidrule(lr){14-16}
Model & CW & PE & Method & MMLU & LAMB. & MathQA & BoolQ & OBQA & PIQA & SIQA & ARC-C & Avg. & Rec. & RULER & Tokens \\
\midrule
\multirow{8}{*}{LLaMA2-7B} & 4K & -- & -- & 45.99 & 71.18 & 29.92 & 78.23 & 44.00 & 78.73 & 40.28 & 49.23 & 54.69 & 100.0 & -- & -- \\
 & 32K & PI $8\times$ & -- & 24.65 & 6.18 & 19.73 & 58.29 & 36.20 & 69.31 & 34.60 & 24.91 & 34.23 & 62.6 & -- & -- \\
 & 32K & PI $8\times$ & CPT & 37.35 & 67.30 & 27.60 & 77.52 & 44.00 & \textbf{78.73} & 36.18 & 48.89 & 52.20 & 95.4 & 59.6 & 256M \\
 & 32K & PI $8\times$ & LongReD & 38.40 & 67.57 & 27.60 & \textbf{78.35} & 44.00 & \textbf{78.73} & 36.39 & 50.94 & 52.75 & 96.4 & 59.7 & 256M \\
 & 32K & PI $8\times$ & LinearARD & \textbf{39.54} & \textbf{67.82} & \textbf{29.31} & 76.82 & \textbf{44.20} & 78.40 & \textbf{39.82} & \textbf{51.11} & \textbf{53.38} & \textbf{97.6} & \textbf{65.1} & 256M \\
 & 32K & PI $8\times$ & CPT & 25.41 & 27.87 & 21.04 & 65.54 & 39.40 & 73.78 & 35.58 & 39.68 & 41.03 & 75.0 & 0.040 & 4.25M \\
 & 32K & PI $8\times$ & LongReD & 25.71 & 27.34 & 20.77 & 66.54 & 42.40 & 74.32 & 34.90 & 39.51 & 41.44 & 75.8 & 0.035 & 4.25M \\
 & 32K & PI $8\times$ & LinearARD & \textbf{34.50} & \textbf{63.40} & \textbf{27.70} & \textbf{76.54} & \textbf{43.90} & \textbf{77.30} & \textbf{35.98} & \textbf{48.10} & \textbf{50.93} & \textbf{93.1} & \textbf{63.0} & 4.25M \\
\midrule
\multirow{8}{*}{LLaMA3-8B} & 8K & -- & -- & 65.42 & 72.22 & 42.08 & 81.71 & 44.80 & 80.85 & 41.61 & 59.38 & 61.01 & 100.0 & -- & -- \\
 & 32K & PI $4\times$ & -- & 24.17 & 4.01 & 26.53 & 55.84 & 28.60 & 67.19 & 34.70 & 26.96 & 33.50 & 54.9 & -- & -- \\
 & 32K & PI $4\times$ & CPT & 60.74 & 69.38 & \textbf{39.87} & 80.24 & 43.60 & \textbf{80.36} & 39.00 & 57.42 & 58.83 & 96.4 & 81.3 & 256M \\
 & 32K & PI $4\times$ & LongReD & \textbf{61.26} & 68.98 & 38.83 & \textbf{80.98} & \textbf{45.20} & 80.00 & 39.82 & \textbf{57.59} & 59.08 & 96.9 & 67.9 & 256M \\
 & 32K & PI $4\times$ & LinearARD & 61.22 & \textbf{71.76} & 39.33 & 79.54 & 43.60 & 79.71 & \textbf{41.90} & 57.34 & \textbf{59.30} & \textbf{97.2} & \textbf{81.6} & 256M \\
 & 32K & PI $4\times$ & CPT & 35.20 & 48.67 & 29.92 & 71.01 & 36.20 & 73.94 & 34.70 & 43.69 & 46.66 & 76.5 & 0.043 & 4.25M \\
 & 32K & PI $4\times$ & LongReD & 41.18 & 44.74 & 31.79 & 73.82 & 37.00 & 74.05 & 34.85 & 46.04 & 47.93 & 78.6 & 0.044 & 4.25M \\
 & 32K & PI $4\times$ & LinearARD & \textbf{56.70} & \textbf{66.52} & \textbf{37.12} & \textbf{80.46} & \textbf{43.60} & \textbf{80.03} & \textbf{38.18} & \textbf{57.00} & \textbf{57.45} & \textbf{94.2} & \textbf{67.8} & 4.25M \\
\midrule
\multirow{8}{*}{Mistral-7B-v0.1} & 8K & -- & -- & 62.54 & 72.74 & 36.58 & 84.62 & 45.80 & 82.81 & 41.91 & 60.66 & 60.96 & 100.0 & -- & -- \\
 & 32K & PI $4\times$ & -- & 23.59 & 2.63 & 21.07 & 59.30 & 32.40 & 62.40 & 33.83 & 28.67 & 32.99 & 54.1 & -- & -- \\
 & 32K & PI $4\times$ & CPT & 53.53 & 67.94 & 33.70 & 83.43 & 43.60 & 79.71 & 39.82 & 56.57 & 57.29 & 94.0 & 55.3 & 256M \\
 & 32K & PI $4\times$ & LongReD & 58.82 & 69.13 & 35.24 & \textbf{84.50} & \textbf{45.80} & \textbf{81.61} & 40.99 & 58.98 & 59.38 & 97.4 & \textbf{62.3} & 256M \\
 & 32K & PI $4\times$ & LinearARD & \textbf{58.92} & \textbf{72.52} & \textbf{36.45} & 84.43 & 45.20 & 81.56 & \textbf{42.98} & \textbf{59.39} & \textbf{60.18} & \textbf{98.7} & 62.0 & 256M \\
 & 32K & PI $4\times$ & CPT & 24.83 & 46.32 & 23.12 & 64.62 & 37.00 & 70.62 & 35.94 & 37.46 & 42.49 & 69.7 & 0.010 & 4.25M \\
 & 32K & PI $4\times$ & LongReD & 24.20 & 28.78 & 23.45 & 63.94 & 37.40 & 69.26 & 35.37 & 39.08 & 40.19 & 65.9 & 0.010 & 4.25M \\
 & 32K & PI $4\times$ & LinearARD & \textbf{51.77} & \textbf{63.30} & \textbf{34.20} & \textbf{87.21} & \textbf{43.56} & \textbf{81.83} & \textbf{39.36} & \textbf{58.65} & \textbf{57.49} & \textbf{94.3} & \textbf{60.2} & 4.25M \\
\bottomrule
\end{tabular}}
\caption{Main results on short-text benchmarks and long-context robustness. Accuracy in \% is reported on eight short-text benchmarks and their mean, denoted Avg. CW denotes context window length, PE denotes the RoPE scaling configuration, Rec. denotes recovery relative to the native model, and RULER is averaged over 8K/16K/32K evaluations. Near-zero RULER values for 4.25M-token CPT/LongReD indicate unstable or unparseable long-context outputs. LAMB. denotes LAMBADA, OBQA denotes OpenBookQA, and ARC-C denotes ARC-Challenge.}
\label{tab:main_results}
\end{table*}

\paragraph{Stage 2: Context adaptation.}
After relation distillation, a lightweight CPT stage is optionally run under the scaled RoPE configuration. This stage uses the standard language modeling objective to adapt the student to the expanded context window. In this setting, Stage~1 already restores short-context behavior, so this optional stage can be kept compute-efficient.

\paragraph{Parameter efficiency.}
Most of the model is frozen and only the attention projection matrices are optimized, focusing updates on components most sensitive to RoPE perturbations. To further reduce training cost, LoRA or QLoRA adapters~\cite{hu2022lora,dettmers2023qlora} are optionally employed during both stages.

\section{Experiments}
\label{sec:experiments}

\subsection{Experimental Setup}
\label{subsec:exp_setup}

\paragraph{Models.}
Three pretrained backbones are studied: LLaMA2-7B~\cite{touvron2023llama} with a native 4K context window, LLaMA3-8B~\cite{grattafiori2024llama} with a native 8K context window, and Mistral-7B-v0.1 with a native 8K context window.
The maximum context is extended using PI~\cite{chen2023pi}, which rescales the RoPE frequency schedule without changing model parameters.
The focus is on the challenging 32K extension: for LLaMA2-7B, PI with an $8\times$ scaling factor (4K$\rightarrow$32K) is used, while for LLaMA3-8B and Mistral-7B-v0.1, PI with a $4\times$ scaling factor (8K$\rightarrow$32K) is used.

\paragraph{Training and optimization.}
For each backbone, the PI-scaled model acts as the student and the native-RoPE model as a frozen teacher.
The student is trained on sequences within the teacher's native context range using our final-layer $Q/Q, K/K$, and $V/V$ relation distillation objective (Eq.~\ref{eq:overall_loss}).
Unless otherwise stated, only the attention-module $Q/K/V$ projection weights are updated and all remaining parameters are kept fixed.
In addition to this distillation phase, a short continued-pretraining phase is run at the extended context length to further adapt the student to long sequences.
All models are optimized with AdamW, using a base learning rate of $2\times10^{-5}$, linear warmup followed by cosine decay, gradient clipping at 5.0, and gradient checkpointing for the student.

\paragraph{Baselines.}
Our comparisons include PI-scaled models without restoration, CPT under PI, and LongReD~\cite{longred2025}, which restores performance via hidden-state distillation.
For CPT and LongReD, the training budgets reported for this setting (256M tokens) are followed, while the restoration uses only a few million tokens and adds a lightweight 2M-token CPT phase.

\paragraph{Evaluation Benchmarks.}
Short-context performance is evaluated on MMLU~\cite{hendrycks2021mmlu} and seven standard short-text benchmarks, including LAMBADA~\cite{paperno2016lambada}, MathQA~\cite{amini2019mathqa}, BoolQ~\cite{clark2019boolq}, OpenBookQA~\cite{mihaylov2018openbookqa}, PIQA~\cite{bisk2020piqa}, SIQA-CA~\cite{sap2019socialiqa}, and ARC-Challenge~\cite{clark2018think}.
The mean over these eight scores is reported, denoted Avg.
Long-context robustness is measured with RULER~\cite{ruler2024}.
For all three backbones, evaluation is conducted at 8K, 16K, and 32K tokens and the average over these three lengths is reported.
Tables~\ref{tab:ruler_llama2},~\ref{tab:ruler_llama3}, and~\ref{tab:ruler_mistral} provide per-task breakdowns in Appendix~\ref{app:ruler_breakdown}.

\subsection{Main Results}
\label{subsec:main_results}

\paragraph{Short-context restoration.}
Table~\ref{tab:main_results} shows that PI scaling creates a large native-to-scaled mismatch: the short-text Avg.\ drops to 34.23/33.50/32.99 on LLaMA2-7B, LLaMA3-8B, and Mistral-7B-v0.1. With only 4.25M tokens, final-layer LinearARD recovers 93.1\%, 94.2\%, and 94.3\% of the native Avg., respectively, while budget-matched CPT and LongReD recover only 65.9--78.6\%. The gains are especially visible on geometry-sensitive tasks. For LLaMA2-7B, LAMBADA rises from 6.18 after PI scaling to 63.40 after LinearARD, and ARC-Challenge rises from 24.91 to 48.10. On LLaMA3-8B, LinearARD improves the 4.25M-token Avg.\ from 46.66/47.93 for CPT/LongReD to 57.45, narrowing the gap to the 61.01 native score. Similar recoveries appear on Mistral, where LinearARD reaches 57.49 Avg.\ while using roughly $60\times$ fewer tokens than the 256M-token baselines.

\paragraph{Long-context robustness and budget.}
LinearARD also gives strong RULER performance under the constrained budget: 63.0 on LLaMA2-7B, 67.8 on LLaMA3-8B, and 60.2 on Mistral-7B-v0.1. This matches or exceeds the 256M-token LongReD baseline on LLaMA2-7B and LLaMA3-8B and remains close on Mistral, despite using only 4.25M tokens. The comparison with budget-matched controls is sharper: CPT and LongReD remain near zero on RULER under the same token budget, indicating that ordinary language-modeling updates or hidden-state matching do not reliably activate long-context behavior at this scale. Importantly, these long-context gains are obtained without sacrificing the restored short-context scores, so the method avoids simply trading one regime for the other. At the hardest 32K length, LinearARD improves over LongReD on all three backbones (41.8$\rightarrow$47.4, 50.0$\rightarrow$51.9, and 36.3$\rightarrow$37.0). Token accounting and per-task RULER breakdowns are provided in Appendices~\ref{app:token_accounting} and~\ref{app:ruler_breakdown}.

\subsection{Ablation Studies}
\label{subsec:ablation}

The ablations separate relational targets from the lightweight CPT stage; full tables are in Appendix~\ref{app:ablation_details}. On LLaMA2-7B with PI $8\times$ and no CPT, the diagnostic all-layer $Q/Q+K/K+V/V$ relation objective reaches 56.05 Avg.\ and 93.64\% recovery, outperforming direct attention-map distillation. Removing $V/V$ lowers Avg.\ to 54.27, while output-logit KL gives only a small gain at materially higher memory cost.

The CPT ablation shows a functional separation: ARD repairs short-context behavior, while the short CPT stage mainly activates long-context robustness. Short-text Avg.\ changes only mildly after CPT, whereas RULER rises from 36.8 to 63.0 for the final-layer main setting. Appendix~\ref{app:layerwise} further shows that later layers carry most of the restoration signal, motivating the final-layer default used in Table~\ref{tab:main_results}.

\section{Conclusion}
\label{sec:conclusion}

In this paper, we presented LinearARD. By distilling $Q/Q$, $K/K$, and $V/V$ relational distributions from a native-RoPE teacher, LinearARD directly rectifies the fine-grained positional distortions in the attention mechanism. Our analysis of training dynamics reveals a strong causal link between internal attention drift and output distribution shifts: aligning the internal relational structure implicitly restores the model's output proficiency, rendering explicit logit-level supervision redundant. To overcome the quadratic memory barrier, an IO-aware, linear-memory kernel is introduced to enable exact dense distillation on ultra-long sequences. Empirical evaluations on LLaMA2-7B, LLaMA3-8B, and Mistral-7B-v0.1 show that LinearARD restores short-context performance with about $60\times$ fewer tokens than standard continued pre-training and provides strong long-context robustness, especially at 32K. This work underscores the importance of maintaining structural integrity within the attention mechanism for efficient and effective context window extension.

% Intentionally left blank for now.

\bibliography{references.bib}
\bibliographystyle{plainnat}

%%%%%%%%%%%%%%%%%%%%%%%%%%%%%%%%%%%%%%%%%%%%%%%%%%%%%%%%%%%%%%%%%%%%%%%%%%%%%%%
%%%%%%%%%%%%%%%%%%%%%%%%%%%%%%%%%%%%%%%%%%%%%%%%%%%%%%%%%%%%%%%%%%%%%%%%%%%%%%%
% APPENDIX
%%%%%%%%%%%%%%%%%%%%%%%%%%%%%%%%%%%%%%%%%%%%%%%%%%%%%%%%%%%%%%%%%%%%%%%%%%%%%%%
%%%%%%%%%%%%%%%%%%%%%%%%%%%%%%%%%%%%%%%%%%%%%%%%%%%%%%%%%%%%%%%%%%%%%%%%%%%%%%%
\newpage
\appendix

\phantomsection
\section*{Appendix Roadmap}
\addcontentsline{toc}{section}{Appendix Roadmap}
This supplement is organized to make it easy to locate experimental details, theory derivations, and diagnostic evidence:
\begin{itemize}
    \item \hyperref[app:ablation_details]{\textbf{Appendix A:} Detailed ablation results for ARD components and the CPT stage.}
    \item \hyperref[app:token_accounting]{\textbf{Appendix B:} Training efficiency and the 4.25M-token budget formula.}
    \item \hyperref[app:kernel_throughput]{\textbf{Appendix C:} Kernel latency, end-to-end throughput, and kernel-time share.}
    \item \hyperref[app:aux_objectives]{\textbf{Appendix D:} Auxiliary objectives used in ablation studies.}
    \item \hyperref[app:layerwise]{\textbf{Appendix E:} Layerwise distillation results and interpretation.}
    \item \hyperref[app:extreme_scaling]{\textbf{Appendix F:} Extreme-scaling short-context stress tests.}
    \item \hyperref[app:ruler_breakdown]{\textbf{Appendix G:} RULER per-length and per-task breakdowns.}
    \item \hyperref[app:proof_gradient]{\textbf{Appendix H:} Proof of Proposition~\ref*{prop:grad_sensitivity}.}
    \item \hyperref[app:complexity]{\textbf{Appendix I:} Proof of Theorem~\ref*{thm:memory}.}
    \item \hyperref[app:exactness]{\textbf{Appendix J:} Proof of Proposition~\ref*{prop:exactness}.}
    \item \hyperref[app:backward_kernel]{\textbf{Appendix K:} KL distillation backward-kernel pseudocode.}
    \item \hyperref[app:supplementary_analysis]{\textbf{Appendix L:} Supplementary analyses and numerical verification.}
    \item \hyperref[app:broader_impacts]{\textbf{Appendix M:} Broader impacts.}
\end{itemize}

\section{Detailed Ablation Results}
\label{app:ablation_details}

Tables~\ref{tab:ablation_ard_components}--\ref{tab:ablation_cpt_ruler} collect the detailed ablations referenced in Sec.~\ref{subsec:ablation}. All entries are reported as percentages except token counts and method labels.

\begin{table}[H]
\centering
\footnotesize
\renewcommand{\arraystretch}{1.06}
\setlength{\tabcolsep}{3.8pt}
\begin{tabular*}{\linewidth}{@{\extracolsep{\fill}} l l r r r r r r}
\toprule
PE & Variant & MMLU & LAMB. & BoolQ & OBQA & Avg. & Rec. \\
\midrule
-- & Native & 45.99 & 71.18 & 78.23 & 44.00 & 59.85 & 100.00 \\
\midrule
PI $8\times$ & w/o restore & 24.65 & 6.18 & 58.29 & 36.20 & 31.33 & 52.35 \\
 & ARD (attention) & 31.63 & 65.17 & 76.76 & 41.80 & 53.84 & 89.95 \\
 & \textbf{ARD (QKV rel.)} & \textbf{36.81} & 64.72 & 79.05 & 43.60 & \textbf{56.05} & \textbf{93.64} \\
 & ARD (+logit KL) & 36.38 & \textbf{68.85} & \textbf{79.17} & 43.60 & 57.00 & 95.24 \\
 & ARD (full FT QKV) & 30.42 & 58.66 & 75.38 & 43.60 & 52.02 & 86.91 \\
 & ARD (full FT LLM) & 31.03 & 60.02 & 76.36 & 42.60 & 52.50 & 87.72 \\
 & ARD (w/o V/V) & 33.44 & 61.15 & 79.11 & 43.40 & 54.27 & 90.68 \\
 & ARD (LoRA QKVO+MLP) & 34.13 & 63.30 & 77.74 & 43.20 & 54.59 & 91.21 \\
\bottomrule
\end{tabular*}
\caption{ARD component ablation on LLaMA2-7B with PI $8\times$ and no CPT. Avg.\ is the mean of MMLU, LAMBADA, BoolQ, and OpenBookQA. Rec.\ is the recovery ratio relative to the native model. This diagnostic table compares relation targets under the same ablation protocol; the main LinearARD setting in Table~\ref{tab:main_results} uses final-layer relation alignment for efficiency.}
\label{tab:ablation_ard_components}
\end{table}

\begin{table}[H]
\centering
\footnotesize
\renewcommand{\arraystretch}{1.06}
\setlength{\tabcolsep}{3.2pt}
\begin{tabular*}{\linewidth}{@{\extracolsep{\fill}} l l r r r r r r r r r}
\toprule
\multicolumn{2}{c}{} & \multicolumn{9}{c}{Short-text accuracy (\%)} \\
\cmidrule(lr){3-11}
PE & Method & MMLU & LAMB. & MathQA & BoolQ & OBQA & PIQA & SIQA & ARC-C & Avg. \\
\midrule
-- & Native & 46.0 & 71.2 & 29.9 & 78.2 & 44.0 & 78.7 & 40.3 & 49.2 & 54.7 \\
PI $8\times$ & ARD (no CPT) & 36.3 & 63.2 & 27.2 & 79.4 & 43.8 & 78.1 & 36.3 & 49.4 & 51.7 \\
PI $8\times$ & ARD + CPT & 36.8 & 64.7 & 27.9 & 77.5 & 43.8 & 78.6 & 36.0 & 49.6 & 51.9 \\
\bottomrule
\end{tabular*}
\caption{Short-context side of the CPT ablation on LLaMA2-7B with PI $8\times$. ARD denotes relation distillation on PG19, and CPT denotes the subsequent lightweight SlimPajama stage. LAMB., OBQA, and ARC-C denote LAMBADA, OpenBookQA, and ARC-Challenge.}
\label{tab:ablation_cpt_short}
\end{table}

\begin{table}[H]
\centering
\footnotesize
\renewcommand{\arraystretch}{1.06}
\setlength{\tabcolsep}{6pt}
\begin{tabular*}{0.62\linewidth}{@{\extracolsep{\fill}} l r r r r}
\toprule
Method & R@8K & R@16K & R@32K & Avg. \\
\midrule
ARD (no CPT) & 46.5 & 42.5 & 21.4 & 36.8 \\
ARD + CPT & 74.9 & 67.4 & 47.3 & 63.2 \\
\bottomrule
\end{tabular*}
\caption{Long-context side of the CPT ablation on LLaMA2-7B with PI $8\times$. Avg.\ is the mean over 8K, 16K, and 32K.}
\label{tab:ablation_cpt_ruler}
\end{table}

\section{Training Efficiency and Token Accounting}
\label{app:token_accounting}
Tokens are reported as the number of non-padding tokens consumed during restoration training.
The training pipeline streams text data and slices it into fixed-length blocks at the stage context length, so each sample contains exactly $\mathrm{ctx\_len}$ valid tokens.
As a result, the token budget is deterministic:
\begin{equation}
\label{eq:token_budget}
N_{\mathrm{tok}} = \sum_{s} L_s \cdot B_s \cdot A_s \cdot U_s \cdot G,
\end{equation}
where $L_s$ is the stage context length, $B_s$ is the per-GPU batch size, $A_s$ is the gradient accumulation factor, $U_s$ is the number of optimizer steps in the stage, and $G$ is the number of GPUs.
For LinearARD with a 2M-token CPT stage, the totals in Table~\ref{tab:main_results} decompose into a short-context distillation stage plus this lightweight CPT stage, yielding 4.25M total tokens.
Compared with the 256M-token budget of CPT and LongReD, LinearARD reduces training tokens by approximately $60\times$ for all three backbones.

\section{Kernel and Training Throughput}
\label{app:kernel_throughput}

Table~\ref{tab:kernel_operator_latency} reports single-operator latency for the KL distillation kernel, and Table~\ref{tab:end_to_end_time_throughput} reports end-to-end ARD training throughput. All measurements are conducted on a single machine equipped with one NVIDIA RTX PRO 6000 GPU with 96GB memory, 22 vCPUs based on Intel Xeon Platinum 8470Q processors, and 110GB of system memory. Results are averaged over 20 training steps.

\begin{table}[tbp]
\caption{Single-operator latency for the naive dense implementation and the IO-aware kernel. The naive implementation runs out of memory at 16K and above.}
\label{tab:kernel_operator_latency}
\centering
\small
\renewcommand{\arraystretch}{1.08}
\setlength{\tabcolsep}{5pt}
\begin{tabular}{l c c c c c}
\toprule
Seq. Len. & Naive (ms) & Kernel (ms) & Speedup & Naive & Kernel \\
\midrule
1K & 5.30 & 3.32 & 1.60$\times$ & OK & OK \\
2K & 20.74 & 11.83 & 1.75$\times$ & OK & OK \\
4K & 81.25 & 43.82 & 1.85$\times$ & OK & OK \\
8K & 326.93 & 172.56 & 1.90$\times$ & OK & OK \\
16K & OOM & 683.54 & -- & OOM & OK \\
32K & OOM & 2549.54 & -- & OOM & OK \\
\bottomrule
\end{tabular}
\end{table}

\begin{table}[tbp]
\caption{End-to-end ARD training step time and throughput under different layer scopes. All-32 denotes all-layer relation distillation, while Last-$k$ distills only the final $k$ layers.}
\label{tab:end_to_end_time_throughput}
\centering
\scriptsize
\renewcommand{\arraystretch}{1.08}
\resizebox{\textwidth}{!}{%
\begin{tabular}{l r r r r r r r r r r r r}
\toprule
\multirow{2}{*}{Context} & \multicolumn{6}{c}{Step Time (s)} & \multicolumn{6}{c}{Throughput (tokens/s)} \\
\cmidrule(lr){2-7} \cmidrule(lr){8-13}
 & All-32 & Last-16 & Last-8 & Last-4 & Last-2 & Last-1
 & All-32 & Last-16 & Last-8 & Last-4 & Last-2 & Last-1 \\
\midrule
1K & 0.77 & 0.62 & 0.54 & 0.50 & 0.49 & 0.48 & 1327.2 & 1655.6 & 1882.2 & 2034.6 & 2070.9 & 2144.0 \\
2K & 1.84 & 1.30 & 1.02 & 0.88 & 0.81 & 0.78 & 1113.8 & 1581.4 & 2005.7 & 2320.1 & 2519.2 & 2629.0 \\
4K & 5.50 & 3.47 & 2.45 & 1.94 & 1.68 & 1.56 & 744.7 & 1181.9 & 1672.5 & 2108.4 & 2431.5 & 2628.1 \\
8K & 19.08 & 11.09 & 7.11 & 5.11 & 4.11 & 3.61 & 429.4 & 738.4 & 1152.9 & 1602.2 & 1991.3 & 2267.1 \\
16K & 76.32 & 38.35 & 22.53 & 14.64 & 10.69 & 8.70 & 214.7 & 427.2 & 727.2 & 1119.3 & 1532.6 & 1882.9 \\
32K & 252.61 & 143.53 & 78.03 & 46.70 & 31.04 & 23.22 & 129.7 & 228.3 & 419.9 & 701.7 & 1055.7 & 1411.4 \\
\bottomrule
\end{tabular}}
\end{table}

The KL kernel accounts for a large fraction of end-to-end time: its share ranges from 49.4--68.4\% at 1K and from 72.9--94.5\% at 32K, reaching 94.5\% in the 32K all-layer setting. This confirms that dense relation losses make the kernel the dominant cost, while final-layer distillation substantially reduces the total step time.

\section{Auxiliary Objectives (Ablations)}
\label{app:aux_objectives}

Two auxiliary objectives are defined and evaluated only in ablations.

\paragraph{Attention-map KL.}
For a fixed batch element and head, let $\mathbf{q}_{m,i}\in\mathbb{R}^{d}$ and $\mathbf{k}_{m,j}\in\mathbb{R}^{d}$ denote the query/key vectors at positions $i$ and $j$ extracted from $\mathbf{Q}_m$ and $\mathbf{K}_m$.
The masked scaled dot-product logits and row-wise attention distributions are
\begin{subequations}\label{eq:attn_def}
\begin{align}
    \mathbf{S}_m(i, j) &= \frac{\mathbf{q}_{m,i}\,\mathbf{k}_{m,j}^{\top}}{\sqrt{d}} + \mathbf{M}(i, j),
    \label{eq:attn_logits}\\
    \mathbf{A}_m(i, j) &=
    \frac{\exp(\mathbf{S}_m(i, j))}{\sum_{k=1}^n \exp(\mathbf{S}_m(i, k))}.
    \label{eq:attn_probs}
\end{align}
\end{subequations}
The attention-map distillation loss instantiates Eq.~(\ref{eq:kl_def}) with $\mathbf{R}_m \leftarrow \mathbf{A}_m$, i.e., $\mathcal{L}_{\text{attn}} = \mathcal{L}_{\text{KL}}(\mathbf{A}_t,\mathbf{A}_s)$.

\paragraph{Logit KL.}
A temperature-scaled KL loss on output logits $\mathbf{z}$ is optionally added:
\begin{equation}
\label{eq:logits_kl}
    \mathcal{L}_{\text{logits}}
    = D_{\mathrm{KL}}\!\left(\text{softmax}(\mathbf{z}_t/T) \parallel \text{softmax}(\mathbf{z}_s/T)\right).
\end{equation}

\paragraph{Ablation variants.}
In Table~\ref{tab:ablation_ard_components}, ``+logit KL'' adds $\mathcal{L}_{\text{logits}}$ to the default relation objective (Eq.~\ref{eq:overall_loss}), ``attention'' replaces the Q/Q and K/K relation terms with $\mathcal{L}_{\text{attn}}$, and ``w/o V/V'' removes the value-side relation term.

\section{Layerwise Distillation}
\label{app:layerwise}

Figure~\ref{fig:layerwise_selection} reports single-layer ARD on LLaMA2-7B with PI $8\times$ and no CPT. The first four layers provide little recovery signal, recovery begins around layers 5--8, and the final layers form a high plateau. Distilling only the last layer reaches 89.83\% recovery in this no-CPT diagnostic, while all-layer alignment reaches 93.64\%; the main experiments use the final-layer setting because it captures most of the restoration signal at substantially lower training cost.

\begin{figure}[H]
\centering
\maybeincludegraphics[width=0.82\linewidth]{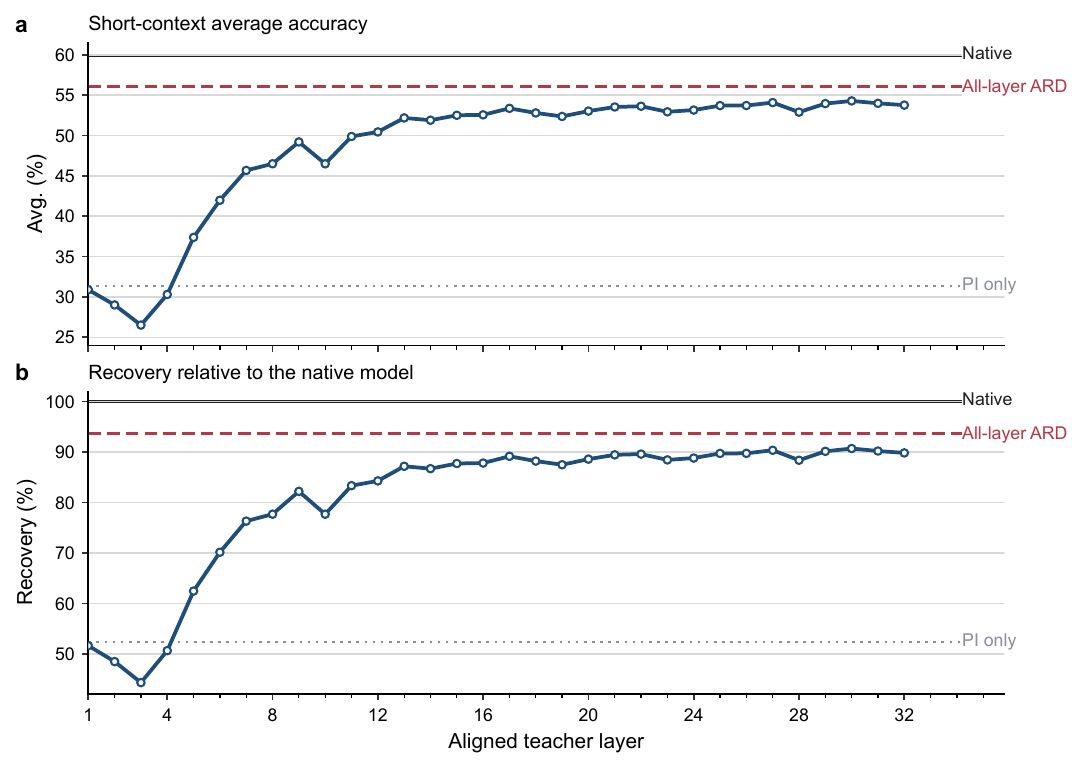}
\caption{Layerwise contribution of ARD on LLaMA2-7B with PI $8\times$ and no CPT. Later layers dominate the restoration signal, while early layers contribute little or can hurt recovery.}
\label{fig:layerwise_selection}
\end{figure}

\section{Extreme Scaling Short-Context Stress Test}
\label{app:extreme_scaling}

Table~\ref{tab:extreme_scaling_short} evaluates LLaMA2-7B under 64K and 128K context extensions. These runs use ARD only, without RULER evaluation and without the CPT stage. ARD improves short-context recovery from 55.57\% to 91.79\% at 64K and from 53.35\% to 88.98\% at 128K, but these results should be interpreted as a stress test rather than a full long-context benchmark.

\begin{table}[H]
\centering
\scriptsize
\renewcommand{\arraystretch}{1.06}
\resizebox{\textwidth}{!}{%
\begin{tabular}{l c l r r r r r r r r r r}
\toprule
PE & CW & Method & MMLU & LAMB. & MathQA & BoolQ & OBQA & PIQA & SIQA & ARC-C & Avg. & Rec. \\
\midrule
-- & 4K & Native & 45.99 & 71.18 & 29.92 & 78.23 & 44.00 & 78.73 & 40.28 & 49.23 & 54.69 & 100.0 \\
PI $8\times$ & 32K & w/o restore & 24.65 & 6.18 & 19.73 & 58.29 & 36.20 & 69.31 & 34.60 & 24.91 & 34.23 & 62.60 \\
PI $8\times$ & 32K & ARD & 36.81 & 64.72 & 27.94 & 77.54 & 43.80 & 78.62 & 35.98 & 49.57 & 51.87 & 94.85 \\
PI $16\times$ & 64K & w/o restore & 23.22 & 0.66 & 19.53 & 44.04 & 32.60 & 63.71 & 34.54 & 24.83 & 30.39 & 55.57 \\
PI $16\times$ & 64K & ARD & 32.72 & 59.89 & 26.26 & 77.55 & 42.20 & 77.53 & 36.30 & 49.15 & 50.20 & 91.79 \\
PI $32\times$ & 128K & w/o restore & 22.98 & 0.35 & 18.99 & 38.44 & 32.40 & 61.86 & 34.08 & 24.32 & 29.18 & 53.35 \\
PI $32\times$ & 128K & ARD & 30.05 & 55.75 & 25.76 & 74.34 & 43.00 & 77.42 & 36.15 & 46.84 & 48.66 & 88.98 \\
\bottomrule
\end{tabular}}
\caption{Short-context stress test under larger PI scaling factors on LLaMA2-7B. The 64K/128K ARD runs use 288 ARD steps, corresponding to 4.5M distillation tokens, and do not include CPT.}
\label{tab:extreme_scaling_short}
\end{table}

\section{RULER Breakdown}
\label{app:ruler_breakdown}

Table~\ref{tab:budget_matched_ruler_lengths} summarizes the strict 4.25M-token RULER controls by length. Tables~\ref{tab:ruler_llama2},~\ref{tab:ruler_llama3}, and~\ref{tab:ruler_mistral} then present per-task RULER results at 8K, 16K, and 32K for the 256M-token baselines and LinearARD. On LLaMA2-7B, final-layer LinearARD+CPT improves the 32K average from 41.8 (LongReD) to 47.4, with major gains on retrieval-heavy tasks such as NIAH-MV (23.0$\rightarrow$59.7). On LLaMA3-8B, the main gain appears on Variable Tracking at 32K (62.2$\rightarrow$70.0), while multi-needle tasks (NIAH-MQ/MV) remain challenging.

\begin{table}[H]
\centering
\small
\renewcommand{\arraystretch}{1.08}
\setlength{\tabcolsep}{6pt}
\begin{tabular}{l l r r r r}
\toprule
Backbone & Method & 8K & 16K & 32K & Avg. all \\
\midrule
LLaMA2-7B & CPT (4.25M) & 0.0625 & 0.0192 & 0.0383 & 0.040 \\
LLaMA2-7B & LongReD (4.25M) & 0.0458 & 0.0292 & 0.0300 & 0.035 \\
LLaMA3-8B & CPT (4.25M) & 0.0475 & 0.0442 & 0.0375 & 0.043 \\
LLaMA3-8B & LongReD (4.25M) & 0.0408 & 0.0400 & 0.0508 & 0.044 \\
Mistral-7B-v0.1 & CPT (4.25M) & 0.0100 & 0.0108 & 0.0100 & 0.010 \\
Mistral-7B-v0.1 & LongReD (4.25M) & 0.0150 & 0.0075 & 0.0075 & 0.010 \\
\bottomrule
\end{tabular}
\caption{Per-length RULER averages for budget-matched CPT and LongReD controls. Values remain near zero under the same 4.25M-token budget used by LinearARD.}
\label{tab:budget_matched_ruler_lengths}
\end{table}

\begin{table*}[t]
\centering
\scriptsize
\renewcommand{\arraystretch}{1.08}
\setlength{\tabcolsep}{2.4pt}
\begin{tabular*}{\textwidth}{@{\extracolsep{\fill}} l ccc ccc ccc ccc}
\toprule
Task & \multicolumn{3}{c}{CPT (256M)} & \multicolumn{3}{c}{LongReD (256M)} & \multicolumn{3}{c}{LinearARD+CPT (2M)} & \multicolumn{3}{c}{LinearARD (no CPT)} \\
\cmidrule(lr){2-4} \cmidrule(lr){5-7} \cmidrule(lr){8-10} \cmidrule(lr){11-13}
 & 8K & 16K & 32K & 8K & 16K & 32K & 8K & 16K & 32K & 8K & 16K & 32K \\
\midrule
NIAH-S1 & 95.0 & 91.0 & 82.0 & 95.0 & 94.0 & 88.0 & 100.0 & 100.0 & 100.0 & 97.0 & 90.0 & 76.0 \\
NIAH-S2 & 95.0 & 96.0 & 78.0 & 95.0 & 97.0 & 86.0 & 100.0 & 100.0 & 96.0 & 80.0 & 74.0 & 30.0 \\
NIAH-S3 & 91.0 & 84.0 & 55.0 & 95.0 & 91.0 & 60.0 & 88.0 & 88.0 & 43.0 & 45.0 & 52.0 & 15.0 \\
NIAH-K1 & 93.0 & 90.0 & 62.0 & 95.0 & 94.0 & 63.0 & 96.0 & 96.0 & 76.0 & 66.0 & 58.0 & 25.0 \\
NIAH-K2 & 76.0 & 68.0 & 23.0 & 76.0 & 60.0 & 20.0 & 74.0 & 70.0 & 29.0 & 47.0 & 43.0 & 13.0 \\
NIAH-K3 & 22.0 & 10.0 & 0.0 & 26.0 & 11.0 & 0.0 & 27.0 & 0.0 & 0.0 & 1.0 & 0.0 & 1.0 \\
NIAH-MQ & 90.8 & 87.2 & 52.0 & 91.0 & 89.0 & 57.0 & 92.5 & 86.9 & 49.3 & 65.0 & 54.3 & 28.0 \\
NIAH-MV & 82.8 & 60.0 & 32.0 & 83.0 & 57.0 & 23.0 & 96.6 & 88.8 & 59.7 & 47.3 & 28.0 & 15.8 \\
VT & 84.8 & 37.6 & 30.4 & 86.0 & 31.0 & 13.0 & 59.5 & 43.0 & 39.1 & 10.8 & 11.2 & 0.0 \\
CWE & 36.0 & 34.0 & 41.4 & 36.0 & 25.0 & 36.0 & 43.7 & 32.3 & 16.6 & 52.2 & 45.1 & 27.9 \\
FWE & 67.3 & 61.0 & 29.0 & 68.0 & 60.0 & 39.0 & 63.5 & 66.1 & 43.9 & 15.7 & 21.7 & 7.7 \\
QA(SQuAD) & 48.1 & 44.1 & 15.4 & 50.0 & 42.0 & 16.0 & 53.5 & 34.9 & 16.1 & 31.0 & 33.0 & 18.0 \\
\midrule
Avg. & 73.5 & 63.6 & 41.7 & 74.7 & 62.6 & 41.8 & \textbf{74.5} & \textbf{67.2} & \textbf{47.4} & 46.5 & 42.5 & 21.4 \\
\midrule
Avg. all & \multicolumn{3}{c}{59.6} & \multicolumn{3}{c}{59.7} & \multicolumn{3}{c}{\textbf{63.0}} & \multicolumn{3}{c}{36.8} \\
\bottomrule
\end{tabular*}
\caption{Per-task RULER scores in \% for LLaMA2-7B with PI scaling factor $8\times$, evaluated at 8K, 16K, and 32K. NIAH denotes Needle-In-A-Haystack variants. VT, CWE, FWE, and QA(SQuAD) follow RULER task naming. Avg.\ averages over subtasks at a fixed length. Avg.\ all averages over all subtasks and lengths and matches the aggregate RULER reported in Table~\ref{tab:main_results}. LinearARD denotes our method. LinearARD without CPT removes the lightweight CPT stage.}
\label{tab:ruler_llama2}
\end{table*}

\begin{table*}[t]
\centering
\scriptsize
\renewcommand{\arraystretch}{1.08}
\setlength{\tabcolsep}{3.0pt}
\begin{tabular*}{\textwidth}{@{\extracolsep{\fill}} l ccc ccc ccc}
\toprule
Task & \multicolumn{3}{c}{CPT (256M)} & \multicolumn{3}{c}{LongReD (256M)} & \multicolumn{3}{c}{LinearARD+CPT (2M)} \\
\cmidrule(lr){2-4} \cmidrule(lr){5-7} \cmidrule(lr){8-10}
 & 8K & 16K & 32K & 8K & 16K & 32K & 8K & 16K & 32K \\
\midrule
NIAH-S1 & 99.0 & 100.0 & 99.0 & 89.0 & 89.0 & 91.0 & 99.0 & 99.0 & 99.0 \\
NIAH-S2 & 100.0 & 100.0 & 99.0 & 86.0 & 84.0 & 62.0 & 96.0 & 96.0 & 88.0 \\
NIAH-S3 & 100.0 & 98.0 & 94.0 & 81.0 & 73.0 & 63.0 & 66.0 & 71.0 & 64.0 \\
NIAH-K1 & 99.0 & 95.0 & 93.0 & 84.0 & 82.0 & 56.0 & 98.0 & 90.0 & 73.0 \\
NIAH-K2 & 99.0 & 80.0 & 60.0 & 92.0 & 83.0 & 66.0 & 95.0 & 96.0 & 61.0 \\
NIAH-K3 & 90.0 & 55.0 & 20.0 & 75.0 & 57.0 & 23.0 & 29.0 & 22.0 & 11.0 \\
NIAH-MQ & 98.5 & 94.8 & 85.3 & 77.8 & 63.3 & 36.3 & 89.7 & 62.9 & 24.4 \\
NIAH-MV & 98.8 & 95.3 & 85.0 & 70.5 & 58.8 & 37.0 & 88.4 & 53.0 & 25.8 \\
VT & 99.8 & 96.2 & 28.6 & 98.0 & 97.0 & 62.2 & 93.1 & 86.6 & 70.0 \\
CWE & 78.6 & 62.5 & 35.4 & 73.2 & 55.4 & 10.2 & 79.2 & 66.7 & 29.1 \\
FWE & 88.7 & 85.0 & 72.3 & 80.7 & 85.0 & 66.6 & 67.3 & 70.1 & 58.5 \\
QA(SQuAD) & 53.1 & 60.1 & 29.3 & 49.4 & 59.1 & 26.7 & 56.1 & 47.2 & 18.4 \\
\midrule
Avg. & \textbf{92.0} & \textbf{85.1} & \textbf{66.7} & 79.7 & 73.9 & 50.0 & 79.7 & 71.7 & 51.9 \\
\midrule
Avg. all & \multicolumn{3}{c}{\textbf{81.3}} & \multicolumn{3}{c}{67.9} & \multicolumn{3}{c}{67.8} \\
\bottomrule
\end{tabular*}
\caption{Per-task RULER scores in \% for LLaMA3-8B with PI scaling factor $4\times$, evaluated at 8K, 16K, and 32K. Avg.\ averages over subtasks at a fixed length. Avg.\ all averages over all subtasks and lengths and matches the aggregate RULER reported in Table~\ref{tab:main_results}. LinearARD denotes our method.}
\label{tab:ruler_llama3}
\end{table*}

\begin{table*}[t]
\centering
\scriptsize
\renewcommand{\arraystretch}{1.08}
\setlength{\tabcolsep}{3.0pt}
\begin{tabular*}{\textwidth}{@{\extracolsep{\fill}} l ccc ccc ccc}
\toprule
Task & \multicolumn{3}{c}{CPT (256M)} & \multicolumn{3}{c}{LongReD (256M)} & \multicolumn{3}{c}{LinearARD+CPT (2M)} \\
\cmidrule(lr){2-4} \cmidrule(lr){5-7} \cmidrule(lr){8-10}
 & 8K & 16K & 32K & 8K & 16K & 32K & 8K & 16K & 32K \\
\midrule
NIAH-S1 & 100.0 & 100.0 & 73.0 & 94.0 & 88.0 & 50.0 & 93.0 & 91.0 & 51.0 \\
NIAH-S2 & 100.0 & 100.0 & 60.0 & 91.0 & 90.0 & 63.0 & 96.0 & 89.0 & 64.0 \\
NIAH-S3 & 89.0 & 68.0 & 53.0 & 85.0 & 70.0 & 46.0 & 73.0 & 49.0 & 53.0 \\
NIAH-K1 & 98.0 & 93.0 & 62.0 & 95.0 & 90.0 & 45.0 & 92.0 & 88.0 & 54.0 \\
NIAH-K2 & 80.0 & 40.0 & 3.0 & 96.0 & 93.0 & 49.0 & 94.0 & 92.0 & 54.0 \\
NIAH-K3 & 29.0 & 4.0 & 0.0 & 64.0 & 13.0 & 4.0 & 34.0 & 22.0 & 3.0 \\
NIAH-MQ & 97.5 & 75.8 & 34.0 & 95.0 & 63.2 & 21.2 & 88.8 & 66.2 & 25.2 \\
NIAH-MV & 99.5 & 57.2 & 27.5 & 91.0 & 37.2 & 12.2 & 93.7 & 37.3 & 16.8 \\
VT & 90.2 & 53.4 & 9.4 & 93.4 & 63.4 & 43.4 & 100.5 & 61.0 & 41.2 \\
CWE & 48.6 & 28.5 & 0.1 & 77.1 & 51.1 & 26.5 & 65.3 & 50.6 & 24.5 \\
FWE & 7.7 & 72.6 & 20.3 & 79.7 & 92.7 & 59.0 & 72.3 & 82.8 & 40.6 \\
QA(SQuAD) & 58.1 & 40.1 & 17.4 & 52.4 & 40.4 & 16.3 & 49.8 & 40.8 & 17.1 \\
\midrule
Avg. & 74.8 & 61.0 & 30.0 & \textbf{84.5} & \textbf{66.0} & 36.3 & 79.4 & 64.1 & \textbf{37.0} \\
\midrule
Avg. all & \multicolumn{3}{c}{55.3} & \multicolumn{3}{c}{\textbf{62.3}} & \multicolumn{3}{c}{60.2} \\
\bottomrule
\end{tabular*}
\caption{Per-task RULER scores in \% for Mistral-7B-v0.1 with PI scaling factor $4\times$, evaluated at 8K, 16K, and 32K. Avg.\ averages over subtasks at a fixed length. Avg.\ all averages over all subtasks and lengths and matches the aggregate RULER reported in Table~\ref{tab:main_results}. LinearARD denotes our method.}
\label{tab:ruler_mistral}
\end{table*}

On Mistral-7B-v0.1, LinearARD raises the 32K average from 36.3 (LongReD) to 37.0 and substantially improves over CPT (30.0), but trails LongReD on Avg.\ all (60.2 vs.\ 62.3) due to weaker 8K/16K scores. This mirrors the main-text trend: relation alignment is most beneficial in the hardest 32K regime, while medium-length robustness can still benefit from heavier hidden-state distillation.

\section{Proof of Proposition~\ref{prop:grad_sensitivity} (Gradient Behavior in Sparse Regimes)}
\label{app:proof_gradient}

\begin{proof}
Gradient signals from probability MSE and forward KL are compared when the teacher distribution is sparse and peaked. For MSE, $\mathcal{L}_{\text{MSE}}=\tfrac{1}{2}(r_s-r_t)^2$, and by the chain rule:
\begin{equation}
    \frac{\partial \mathcal{L}_{\text{MSE}}}{\partial z_s}
    = \frac{\partial \mathcal{L}_{\text{MSE}}}{\partial r_s} \cdot \frac{\partial r_s}{\partial z_s}
    = (r_s-r_t)\cdot r_s(1-r_s).
\end{equation}
For forward KL on a row-wise categorical distribution, $\mathcal{L}_{\text{KL}}=D_{\mathrm{KL}}(\mathbf{R}_t\parallel \mathbf{R}_s)=\sum_{j} r_t(j)\log\frac{r_t(j)}{r_s(j)}$. Differentiating through the softmax yields the standard identity
\begin{equation}
    \frac{\partial \mathcal{L}_{\text{KL}}}{\partial z_s(j)} = r_s(j)-r_t(j).
\end{equation}
In the ``false negative'' case where the teacher has a peak at $j$ ($r_t(j)>0$) but the student misses it ($r_s(j)\to 0$), the MSE gradient vanishes due to the factor $r_s(1-r_s)$, whereas the KL gradient satisfies $\lim_{r_s(j)\to 0}\frac{\partial \mathcal{L}_{\text{KL}}}{\partial z_s(j)}=r_t(j)$.
\end{proof}

\section{Proof of Theorem~\ref{thm:memory} (Memory Complexity)}
\label{app:complexity}

\begin{proof}
The High Bandwidth Memory (HBM) requirements for the backward pass are analyzed. Let $B$ be the batch size, $H$ the number of heads, $n$ the sequence length, and $d$ the head dimension.

\textbf{Standard Implementation:}
Standard backpropagation requires storing the activation map of the row-wise distributions $\mathbf{R}_s$ (or the logits $\mathbf{Z}_s$) to compute gradients.
\begin{itemize}
    \item Size of $\mathbf{R}_s$: $B \times H \times n \times n$.
    \item Total Memory $\mathcal{M}_{\text{std}} = \mathcal{O}(B H n^2)$.
\end{itemize}
For $n=128\text{k}$, $n^2 \approx 1.6 \times 10^{10}$, which far exceeds the capacity of modern GPUs (e.g., A100 80GB).

\textbf{IO-Aware Tiled Kernel:}
Our algorithm computes gradients block-by-block. The only global tensors stored in HBM are:
\begin{itemize}
    \item LSE Statistics ($\text{LSE}_s, \text{LSE}_t$): Size $2 \times B \times H \times n$.
    \item Gradients ($d\mathbf{X}, d\mathbf{Y}$): Size $2 \times B \times H \times n \times d$.
\end{itemize}
The intermediate logits $\mathbf{Z}$ and distributions $\mathbf{R}$ are of size $T_r \times T_c$ (tile size, e.g., $128 \times 128$) and reside solely in the GPU's SRAM (shared memory), not HBM.
\begin{itemize}
    \item Total Memory $\mathcal{M}_{\text{LinearARD}} = \mathcal{O}(B H n + B H n d) = \mathcal{O}(B H n d)$.
\end{itemize}
This is linear in the sequence length $n$ (for fixed head dimension $d$), matching Theorem~\ref{thm:memory}.
\end{proof}

\section{Proof of Proposition~\ref{prop:exactness} (Exactness)}
\label{app:exactness}

\begin{proof}
The accumulated gradients in Algorithm~\ref{alg:shadow_attn} are shown to match the analytical gradients.

The analytical gradient of the KL loss with respect to an input vector $\mathbf{x}_i$ is:
\begin{equation}
    \frac{\partial \mathcal{L}}{\partial \mathbf{x}_i} = \sum_{j=1}^n \frac{\partial \mathcal{L}}{\partial \mathbf{Z}_s(i,j)} \frac{\partial \mathbf{Z}_s(i,j)}{\partial \mathbf{x}_i}.
\end{equation}
From Appendix~\ref{app:proof_gradient}, it is known that $\frac{\partial \mathcal{L}}{\partial \mathbf{Z}_s(i,j)} = \mathbf{R}_s(i,j) - \mathbf{R}_t(i,j)$.
Also, $\mathbf{Z}_s(i,j) = \frac{\mathbf{x}_i \cdot \mathbf{y}_j^\top}{\sqrt{d}}$, so $\frac{\partial \mathbf{Z}_s(i,j)}{\partial \mathbf{x}_i} = \frac{\mathbf{y}_j}{\sqrt{d}}$.
Substituting these back:
\begin{equation}
    \frac{\partial \mathcal{L}}{\partial \mathbf{x}_i} = \frac{1}{\sqrt{d}} \sum_{j=1}^n (\mathbf{R}_s(i,j) - \mathbf{R}_t(i,j)) \cdot \mathbf{y}_j.
\end{equation}
Algorithm~\ref{alg:shadow_attn} iterates over blocks of keys indexed by $j$. In the inner loop, it computes the local term $d\mathbf{Z}_{local} = \mathbf{R}_s^{block} - \mathbf{R}_t^{block}$ and updates the query gradient:
\begin{equation}
    d\mathbf{X}_s^{(i)} \mathrel{+}= d\mathbf{Z}_{local} \cdot \mathbf{Y}_s^{(j)}.
\end{equation}
Since matrix multiplication is distributive over addition, summing these partial updates over all key blocks $j$ yields the exact full summation over $n$. The re-computation of $\mathbf{R}_s$ and $\mathbf{R}_t$ in the second pass uses the exact same $\text{LSE}$ statistics computed in the first pass, ensuring that the distribution values are numerically identical to those that would be computed in a global pass (within floating-point tolerance). Thus, the gradients are exact.
\end{proof}

\section{KL Distillation Backward Kernel}
\label{app:backward_kernel}

\begin{algorithm}[tb]
   \caption{Linear-Memory KL Distillation for Relation Distributions (Backward)}
   \label{alg:shadow_attn}
\begin{algorithmic}[1]
   \STATE {\bfseries Input:} Student $(\mathbf{X}_s, \mathbf{Y}_s)$, Teacher $(\mathbf{X}_t, \mathbf{Y}_t)$.
   \STATE {\bfseries Output:} Gradient $\nabla_{\mathbf{X}_s} \mathcal{L}, \nabla_{\mathbf{Y}_s} \mathcal{L}$.
   \STATE {\bfseries Note:} For QKV relations, set $(\mathbf{X}_m,\mathbf{Y}_m)\leftarrow(\mathbf{Q}_m,\mathbf{Q}_m)$, $(\mathbf{K}_m,\mathbf{K}_m)$, or $(\mathbf{V}_m,\mathbf{V}_m)$ as in Eq.~\ref{eq:qkv_relations}.
   \STATE \textsc{Phase 1: Global Statistics (Linear Memory)}
   \STATE $\mathrm{LSE}_s \leftarrow \text{ComputeLSE}(\mathbf{X}_s, \mathbf{Y}_s)$
   \STATE $\mathrm{LSE}_t \leftarrow \text{ComputeLSE}(\mathbf{X}_t, \mathbf{Y}_t)$
   \STATE \textsc{Phase 2: Fused Backward Pass via Tiling}
   \STATE Initialize gradients $d\mathbf{X}_s, d\mathbf{Y}_s \leftarrow \mathbf{0}$
   \FOR{blocks of queries $\mathbf{X}_s^{(i)}, \mathbf{X}_t^{(i)}$ loaded to SRAM}
       \FOR{blocks of keys $\mathbf{Y}_s^{(j)}, \mathbf{Y}_t^{(j)}$ loaded to SRAM}
           \STATE \textcolor{gray}{// Recompute logits on-the-fly}
           \STATE $\mathbf{Z}_s \leftarrow \frac{1}{\sqrt{d}}\mathbf{X}_s^{(i)} (\mathbf{Y}_s^{(j)})^\top  + \mathbf{M}^{(i,j)}$
           \STATE $\mathbf{Z}_t \leftarrow \frac{1}{\sqrt{d}}\mathbf{X}_t^{(i)} (\mathbf{Y}_t^{(j)})^\top  + \mathbf{M}^{(i,j)}$
           \STATE \textcolor{gray}{// Reconstruct probabilities using pre-computed LSE}
           \STATE $\mathbf{R}_s \leftarrow \exp(\mathbf{Z}_s - \mathrm{LSE}_s^{(i)})$
           \STATE $\mathbf{R}_t \leftarrow \exp(\mathbf{Z}_t - \mathrm{LSE}_t^{(i)})$
           \STATE \textcolor{gray}{// Analytical gradient of Eq.~\ref{eq:kl_def} w.r.t.\ logits}
           \STATE $d\mathbf{Z} \leftarrow (\mathbf{R}_s - \mathbf{R}_t)/n$
           \STATE \textcolor{gray}{// Accumulate gradients to HBM}
           \STATE $d\mathbf{X}_s^{(i)} \leftarrow d\mathbf{X}_s^{(i)} + \frac{1}{\sqrt{d}}\, d\mathbf{Z} \cdot \mathbf{Y}_s^{(j)}$
           \STATE $d\mathbf{Y}_s^{(j)} \leftarrow d\mathbf{Y}_s^{(j)} + \frac{1}{\sqrt{d}}\, d\mathbf{Z}^\top \cdot \mathbf{X}_s^{(i)}$
       \ENDFOR
   \ENDFOR
\end{algorithmic}
\end{algorithm}
Algorithm~\ref{alg:shadow_attn} computes the exact backward pass for the row-wise KL distillation objective
$\mathcal{L}=\frac{1}{n}\sum_{i=1}^{n} D_{\mathrm{KL}}(P_t(i,\cdot)\,\|\,P_s(i,\cdot))$
while avoiding materializing any $n\times n$ attention matrix. In Phase~1, it precomputes and stores only the per-row log-partition terms
$\mathrm{LSE}_s(i)=\log\sum_j \exp(Z_s(i,j))$ and $\mathrm{LSE}_t(i)=\log\sum_j \exp(Z_t(i,j))$,
which are sufficient to reconstruct probabilities later. In Phase~2, it performs a tiled recomputation over query blocks $i$ and key blocks $j$ resident in SRAM:
it recomputes masked and scaled logits
$\mathbf{Z}_s=\frac{1}{\sqrt d}\mathbf{X}_s^{(i)}(\mathbf{Y}_s^{(j)})^\top+\mathbf{M}^{(i,j)}$
and
$\mathbf{Z}_t=\frac{1}{\sqrt d}\mathbf{X}_t^{(i)}(\mathbf{Y}_t^{(j)})^\top+\mathbf{M}^{(i,j)}$,
reconstructs local probabilities
$\mathbf{R}_s=\exp(\mathbf{Z}_s-\mathrm{LSE}_s^{(i)})$ and $\mathbf{R}_t=\exp(\mathbf{Z}_t-\mathrm{LSE}_t^{(i)})$,
and uses the analytic identity
$\partial \mathcal{L}/\partial \mathbf{Z}_s=(\mathbf{R}_s-\mathbf{R}_t)/n$
(with $\mathbf{M}^{(i,j)}$ ensuring invalid entries contribute zero mass) to form $d\mathbf{Z}$. Finally, it accumulates gradients to the student factors via chain rule for the scaled bilinear form:
$d\mathbf{X}_s^{(i)}\mathrel{+}= \frac{1}{\sqrt d} d\mathbf{Z}\mathbf{Y}_s^{(j)}$ and
$d\mathbf{Y}_s^{(j)}\mathrel{+}= \frac{1}{\sqrt d} d\mathbf{Z}^\top \mathbf{X}_s^{(i)}$.
Because only $\mathrm{LSE}$ vectors and SRAM-sized tiles are kept at any time, the memory footprint scales linearly with sequence length.

\section{Supplementary Analyses and Numerical Verification}
\label{app:supplementary_analysis}
\label{subsec:analysis}

\subsection{Attention--Logit Coupling}
\label{subsubsec:attn_logit_coupling}

\begin{figure}[t]
    \centering
    \maybeincludegraphics[width=0.69\linewidth]{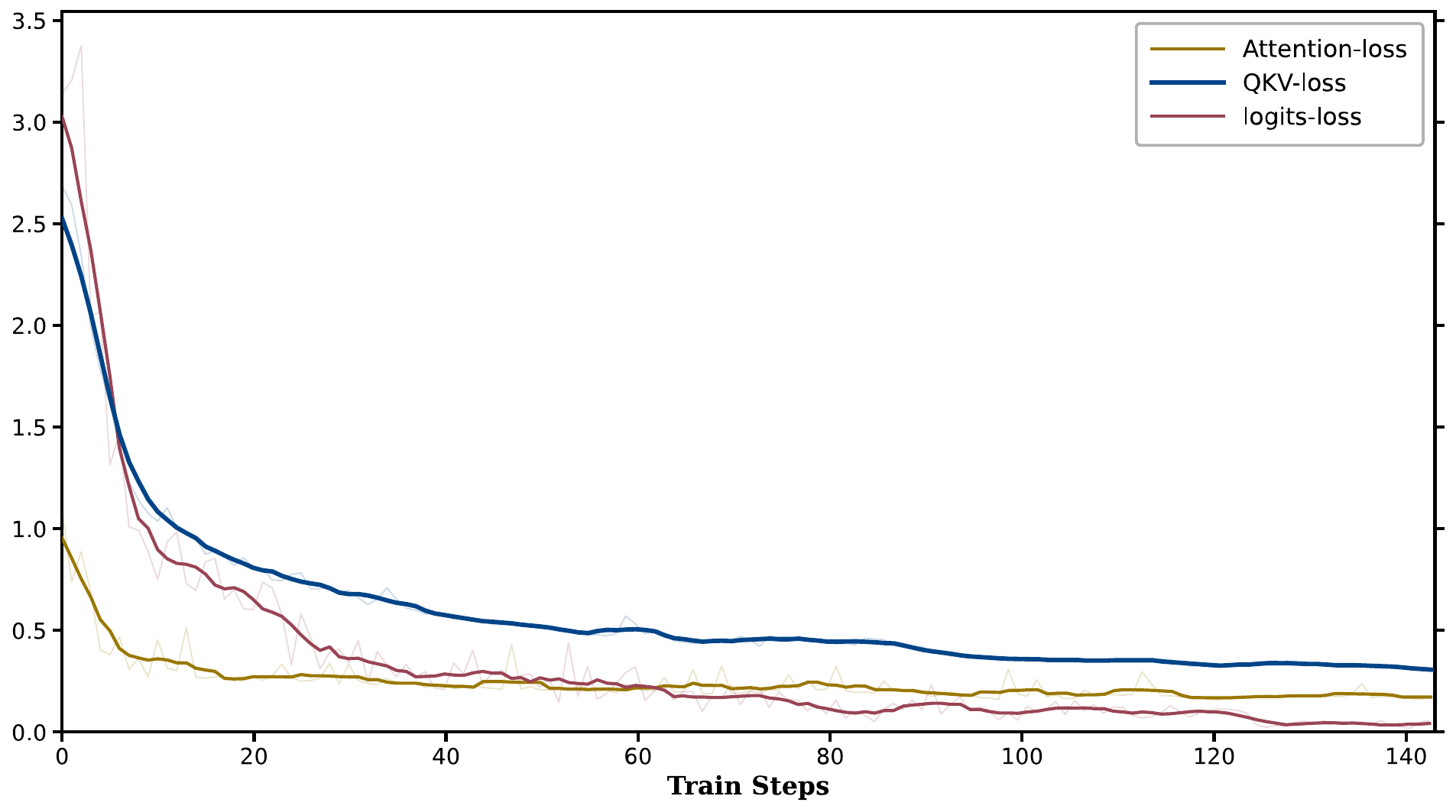}
    \caption{Training dynamics of distillation objectives. The plot shows the optimization target (\textbf{QKV-loss}, blue) alongside monitored but non-optimized metrics (\textbf{Attention-loss}, gold; \textbf{Logit-loss}, maroon). The synchronous decline of all three curves suggests a strong causal link between internal attention drift and output distribution shift.}
    \label{fig:training_curves}
\end{figure}

To better understand the restorative mechanism, we analyze the training dynamics in Figure~\ref{fig:training_curves}. When the model is trained exclusively to align internal relational distributions (\textbf{QKV-loss}, blue line), we observe a significant and simultaneous alignment across all levels of the model: the optimized relation loss decreases from 2.69 to 0.30, and the monitored \textbf{attention-map loss} (gold line) decreases from 1.09 to 0.16. Most notably, the \textbf{output-logit KL} (maroon line), which is not explicitly supervised, exhibits the most dramatic collapse, falling from 3.14 to approximately 0.05. This provides strong empirical evidence that the performance degradation at the output level is a direct symptom of internal attention drift. By precisely rectifying fine-grained positional distortions within the attention mechanism, LinearARD implicitly restores the model's output proficiency, rendering explicit logit-level supervision largely redundant.

\subsection{Kernel Numerical Verification}
\label{subsubsec:kernel_numerical_error}

\begin{table}[H]
\centering
\footnotesize
\setlength{\tabcolsep}{0pt} 
\renewcommand{\arraystretch}{1.2}
\begin{tabular*}{0.60\columnwidth}{@{\extracolsep{\fill}} r S[table-format=1.2] S[table-format=1.1] S[table-format=1.1] @{}}
\toprule
\multirow{2.5}{*}{Length} & {Forward Error} & \multicolumn{2}{c}{Backward Error} \\
\cmidrule(lr){2-2} \cmidrule(l){3-4}
& {($\times 10^{-7}$)} & {Mean ($\times 10^{-4}$)} & {Max ($\times 10^{-2}$)} \\
\midrule
256  & 4.9 & 1.8 & 0.6 \\
512  & 4.9 & 1.7 & 0.7 \\
1024 & 4.7 & 1.5 & 0.8 \\
2048 & 4.6 & 1.2 & 0.9 \\
4096 & 4.9 & 1.0 & 1.0 \\
\bottomrule
\end{tabular*}
\caption{Kernel numerical verification. Relative deviations from a materialized dense reference where memory permits are reported. Forward error is the FP32 loss mean absolute error divided by the mean output magnitude. Backward errors are computed from BF16 gradients using the same normalization, reported as mean and maximum relative deviations.}
\label{tab:linear_kl_error}
\end{table}

Although Proposition~\ref{prop:exactness} establishes analytical equivalence, the blocked execution changes the associativity of floating-point reductions, which can lead to observable discrepancies from a materialized dense reference. To assess numerical correctness, the kernel is benchmarked against a dense implementation whenever memory permits and deviations for both the forward scalar loss and the backward gradient tensors are quantified in Table~\ref{tab:linear_kl_error}. The forward loss is evaluated in FP32 and reported as a relative error $\mathrm{rel}=\frac{\text{mean abs err}}{\mathbb{E}[|y|]}$, normalized by the mean output magnitude, while the backward gradients are evaluated in BF16 with both mean and maximum relative errors reported.

Interpreting these results through the standard floating-point model $\mathrm{fl}(x)=x(1+\delta)$ with $|\delta|\le u$, where the unit roundoff is $u_{32}=2^{-24}$ for FP32 and $u_{b16}=2^{-8}$ for BF16 reflecting BF16's 7-bit fraction plus the implicit leading bit, provides a principled baseline. Computations composed of finitely many arithmetic operations and reductions are unavoidably bounded by the format's intrinsic rounding resolution. When measured relative deviations are on the order of a small multiple of $u$ or well below $u$, they are best attributed to floating-point rounding rather than kernel-induced numerical bias. In Table~\ref{tab:linear_kl_error}, the forward FP32 loss relative error remains at the $10^{-7}$ level and is only a few $u_{32}$, the backward BF16 gradient mean relative error is far below $u_{b16}$, and the backward BF16 gradient maximum relative error is of the same order as $u_{b16}$. These observations indicate that the kernel's discrepancies are essentially limited by the target precision's rounding floor. Equivalently, the kernel's additional error is negligible compared to inherent FP32 and BF16 floating-point error, making it numerically indistinguishable from an exact kernel.

\section{Broader Impacts}
\label{app:broader_impacts}

LinearARD's primary broader impact is to make long-context adaptation more efficient and reproducible. By reducing the memory and token budget required to restore RoPE-scaled LLMs, the method can lower compute and energy costs, broaden access for academic groups and smaller organizations, and support useful applications such as long-document assistance, retrieval-augmented reasoning, technical-document analysis, and educational or scientific workflows. The main risks are those inherited from adapting capable base LLMs: improved efficiency could also be used in harmful deployments, and restored models may retain biases, privacy risks, or security weaknesses from the teacher model. Responsible use should follow the licenses and safeguards of the underlying models and data, avoid sensitive or personally identifiable training data without an appropriate basis, and include downstream checks for bias, toxicity, privacy leakage, and unsafe generation. This work does not release a new high-risk pretrained model, image generator, or scraped dataset.

%%%%%%%%%%%%%%%%%%%%%%%%%%%%%%%%%%%%%%%%%%%%%%%%%%%%%%%%%%%%%%%%%%%%%%%%%%%%%%%
%%%%%%%%%%%%%%%%%%%%%%%%%%%%%%%%%%%%%%%%%%%%%%%%%%%%%%%%%%%%%%%%%%%%%%%%%%%%%%%

\end{document}